\title[Optimal Agnostic Boosting with Improved Time]{Sample-Near-Optimal Agnostic Boosting with Improved Running Time}
\newcommand{\note}[2][]{}
\newcommand{\arthur}[2][]{}
\newcommand{\andrea}[2][]{}
\newcommand{\mikael}[2][]{}
\newcommand{\pagetodo}[2][]{}
\DeclareMathOperator{\ls}{\mathcal{L}}
\DeclareMathOperator*{\p}{\mathbb{P}}
\DeclareMathOperator{\cor}{\mathit{cor}}
\DeclareMathOperator{\fs}{\mathnormal{f^{\ast}}}
\DeclareMathOperator{\Ln}{\mathrm{Ln}} %
\newcommand{\expanderFref}[1]{\expandafter\Fref\expandafter{#1}}
\let\cref\expanderFref
\let\Cref\cref
\newcommand*{\fancyrefalglabelprefix}{alg}      
\newcommand*{\fancyreflinelabelprefix}{line}    
\newcommand*{\fancyrefthmlabelprefix}{thm}      
\newcommand*{\fancyreflemlabelprefix}{lem}      
\newcommand*{\fancyrefcorlabelprefix}{cor}      
\newcommand*{\fancyrefproplabelprefix}{prop}    
\newcommand*{\fancyrefdeflabelprefix}{def}      
\newcommand*{\fancyrefremlabelprefix}{rem}      
\newcommand*{\fancyrefobslabelprefix}{obs}      
\newcommand*{\fancyrefapplabelprefix}{app}      
\newcommand*{\fancyreftemplabelprefix}{temp}    
\newcommand{\AfterThisEnv}[1]{%
  \AddToHookNext{env/\@currenvir/after}{#1}%
}
\newcounter{templabel}
\newcommand{\llasttemplabel}{eq:temp:\the\numexpr\value{templabel} - 1\relax}
\newcommand{\lllasttemplabel}{eq:temp:\the\numexpr\value{templabel} - 2\relax}
\newcommand{\llllasttemplabel}{eq:temp:\the\numexpr\value{templabel} - 3\relax}
\newcommand*{\bbig}[1]{\Big{#1}}
\newcommand*{\bbbig}[1]{\bigg{#1}}
\newcommand*{\bbbbig}[1]{\Bigg{#1}}
\newcommand*{\bbbbigl}[1]{\Biggl{#1}}
\newcommand*{\bbbbigr}[1]{\Biggr{#1}}
\newcommand{\placeholderarg}{{\mathchoice{\:}{\:}{\,}{\,}\cdot\mathchoice{\:}{\:}{\,}{\,}}}
\DeclarePairedDelimiter\Br() 
\DeclarePairedDelimiter\Ceil\lceil\rceil
\newcommand{\multibinom}[2]{%
\left.\mathchoice
  {\left(\kern-0.48em\binom{#1}{#2}\kern-0.48em\right)}
  {\big(\kern-0.30em\binom{\smash{#1}}{\smash{#2}}\kern-0.30em\big)}
  {\left(\kern-0.30em\binom{\smash{#1}}{\smash{#2}}\kern-0.30em\right)}
  {\left(\kern-0.30em\binom{\smash{#1}}{\smash{#2}}\kern-0.30em\right)}
\right.}
\providecommand\given{\errmessage{Used `given` command outside proper context}}
\newcommand\GivenSymbol[1][]{%
    \mathchoice{\:}{\:}{\,}{\,}#1\vert%
    \allowbreak%
    \mathchoice{\:}{\:}{\,}{\,}%
    \mathopen{}%
}
\DeclarePairedDelimiterX\Set[1]\{\}{%
    \renewcommand\given{\GivenSymbol[\delimsize]}%
    #1%
}
\DeclarePairedDelimiterXPP\BrWithGiven[1]{}{(}{)}{}{%
    \renewcommand\given{\GivenSymbol[\delimsize]}%
    #1%
}
\DeclarePairedDelimiterXPP\BrWithGGiven[1]{}{(}{)}{}{%
    \renewcommand\given{\GGivenSymbol[\delimsize]}%
    #1%
}
\DeclarePairedDelimiterXPP\SBrWithGiven[1]{}{[}{]}{}{%
    \renewcommand\given{\GivenSymbol[\delimsize]}%
    #1%
}
\NewDocumentCommand\Prob{ e{_^} }{
    \operatorname*{\mathbb{P}}
    \IfNoValueF{#1}{\sb{#1}}
    \IfNoValueF{#2}{\errmessage{Not supposed to have superscript}\sp{#2}}
    \SBrWithGiven
}
\NewDocumentCommand\Ev{ e{_^} }{
    \operatorname*{\mathbb{E}}
    \IfNoValueF{#1}{\sb{#1}}
    \IfNoValueF{#2}{\errmessage{Not supposed to have superscript}\sp{#2}}
    \SBrWithGiven
}
\NewDocumentCommand\Var{ e{_^} }{
    \operatorname{Var}
    \IfNoValueF{#1}{\sb{#1}}
    \IfNoValueF{#2}{\errmessage{Not supposed to have superscript}\sp{#2}}
    \BrWithGiven
}
\NewDocumentCommand\Cov{ e{_^} }{
    \operatorname{Cov}
    \IfNoValueF{#1}{\sb{#1}}
    \IfNoValueF{#2}{\errmessage{Not supposed to have superscript}\sp{#2}}
    \BrWithGiven
}
\NewDocumentCommand\KL{ e{_^} }{
    \operatorname{KL}
    \IfNoValueF{#1}{\sb{#1}}
    \IfNoValueF{#2}{\errmessage{Not supposed to have superscript}\sp{#2}}
    \BrWithGGiven
}
\NewDocumentCommand{\Fr}{ ssmm }{%
    \IfBooleanTF{#1}{%
        \IfBooleanTF{#2}{%
            #3/(#4)%
        }{%
            #3/#4%
        }%
    }{%
        \mathchoice{%
            \dfrac{#3}{#4}
        }{%
            \sfrac{#3}{#4}
        }{%
            \sfrac{#3}{#4}
        }{%
            \sfrac{#3}{#4}
        }%
    }%
}
\let\origfrac\frac
\NewDocumentCommand{\flatfrac}{ O{}O{}mm }{#1{#3}/#2{#4}}
\NewDocumentCommand{\FFr}{ sO{}O{}mm }{%
    \IfBooleanTF{#1}{%
        \origfrac{#2{#4}}{#3{#5}}
    }{%
        \mathchoice{%
            \origfrac{#4}{#5}
        }{%
            \flatfrac[#2][#3]{#4}{#5}
        }{%
            \flatfrac[#2][#3]{#4}{#5}
        }{%
            \flatfrac[#2][#3]{#4}{#5}
        }%
    }%
}
\let\Frac\FFr
\newcommand*{\methodname}[1]{\ensuremath{\operatorname{\mathsf{#1}}}}
\DeclareMathOperator{\sign}{sign}
\DeclareMathOperator{\DistribOver}{\Delta}
\DeclareMathOperator{\Uniform}{Uniform}
\DeclarePairedDelimiter\abs{\lvert}{\rvert}
\DeclarePairedDelimiterXPP\normp[1]{}\lVert\rVert{_p}{#1}
\DeclarePairedDelimiterXPP\normzero[1]{}\lVert\rVert{_0}{#1}
\DeclarePairedDelimiterXPP\normone[1]{}\lVert\rVert{_1}{#1}
\DeclarePairedDelimiterXPP\normtwo[1]{}\lVert\rVert{_2}{#1}
\DeclarePairedDelimiterXPP\norminf[1]{}\lVert\rVert{_\infty}{#1}
\DeclarePairedDelimiterXPP\normmax[1]{}\lVert\rVert{_\mathrm{max}}{#1}
\DeclarePairedDelimiterXPP\normspec[1]{}\lVert\rVert{_\mathrm{spectral}}{#1}
\let\normmax\norminf
\NewDocumentCommand\BallClosed{ e{_^} }{
    \operatorname{\mathfrak{B}}_]
    \IfNoValueF{#1}{\errmessage{Not supposed to have subscript}\sb{#1}}
    \IfNoValueF{#2}{\sp{#2}}
    \Br
}
\NewDocumentCommand\BallOpen{ e{_^} }{
    \operatorname{\mathfrak{B}}_)
    \IfNoValueF{#1}{\errmessage{Not supposed to have subscript}\sb{#1}}
    \IfNoValueF{#2}{\sp{#2}}
    \Br
}
\let\eps\varepsilon
\renewcommand*{\restriction}[1]{\mathord{\upharpoonright}_{#1}} 
\renewcommand*{\restriction}[1]{\mathord{|}_{#1}} 
\newcommand*{\indicator}[1]{\bm{1}_{\Set{#1}}}    
\DeclareMathOperator*{\argmin}{arg\,min\,}
\DeclareMathOperator{\Loss}{\mathcal{L}}
\DeclareMathOperator{\err}{err}
\newcommand*{\errst}{\mathord{\mathrm{err}^{*}}}
\DeclareMathOperator{\corr}{cor}
\let\cor\corr
\DeclareMathOperator{\Weaklearner}{\mathcal{W}}
\newcommand*{\InputSpace}{\mathcal{X}}
\DeclareMathOperator{\VCdim}{VC}
\begin{document}

\maketitle

\begin{abstract}%
  Boosting is a powerful method that turns weak learners, which perform only slightly better than random guessing, into strong learners with high accuracy.
While boosting is well understood in the classic setting, it is less so in the agnostic case, where no assumptions are made about the data.
Indeed, only recently was the sample complexity of agnostic boosting nearly settled \citep{Us25}, but the known algorithm achieving this bound has exponential running time.
In this work, we propose the first agnostic boosting algorithm with near-optimal sample complexity, running in time polynomial in the sample size when considering the other parameters of the problem fixed.
\end{abstract}

\begin{keywords}%
  Boosting, agnostic learning, sample complexity.
\end{keywords}

\section{Introduction}
\label{sec:intro}

Boosting \citep{Schapire90} is a highly successful learning technique that, in the realizable setting, can produce models with arbitrarily good accuracy starting from models that perform only slightly better than random guessing.
More precisely, boosting establishes the computational equivalence between two learning models that at first might seem fundamentally distant:
\emph{(strong) Probably Approximately Correct (PAC) learning} \citep{Valiant84},
and \emph{weak (PAC) learning} \citep{KearnsV89}.
To make these concepts formal,
consider an input space $\InputSpace$,
a hypothesis class $\fF \subseteq \Set{\pm 1}^\InputSpace$.
A distribution $\dD$ over $\InputSpace \times \Set{\pm 1}$ is called \emph{realizable (relative to $\fF$)} if it satisfies that some target hypothesis $f \in \fF$ achieves zero error under $\dD$,
that is,
\begin{align}
    \exists f \in \fF
    \enspace\text{such that}\enspace
    \err_{\dD}(f)
    \coloneqq \Prob_{(\rx, \ry) \sim \dD}{f(\rx) \ne \ry}
    = 0
    .
    \label{eq:realizability}
\end{align}

A \emph{(strong) PAC learner} is an algorithm that,
for any desired accuracy and confidence parameters, $\eps, \delta \in (0, 1)$,
and any realizable distribution $\dD$ relative to $\fF$,
can learn from a training sequence $\rsS \sim \dD^n$ of size $n = n(\eps, \delta)$ and,
with probability at least $1 - \delta$ over the drawing of $\rsS$,
find a classifier $g\colon \InputSpace \to \Set{\pm 1}$ with
\begin{align}
    \err_{\dD}(g)
    \le \eps
    .
\end{align}
The amount of training data $n(\eps, \delta)$ necessary for the algorithm to reach this goal is called its \emph{sample complexity}.

On the other hand,
given $\gamma \in (0, 1/2]$,
a \emph{$\gamma$-weak learner for $\fF$ with base class $\fH \subseteq \Set{\pm 1}^\InputSpace$} is an algorithm that,
given $m_{0}$ i.i.d.\ samples from an arbitrary realizable distribution $\dD'$ (relative to $\fF$),
returns a classifier $w \in \fH$ such that,
with probability at least $1 - \delta_{0}$ over the drawing of the input samples,
\begin{align}
    \err_{\dD'}(w)
    \le 1/2 - \gamma
    .
\end{align}
That is, for small values of $\gamma$, the algorithm performs only slightly better than random guessing, which would achieve an error of $\Frac{1}{2}$.
Accordingly, the base class $\fH$ is typically much simpler than $\fF$.

Naturally,
the existence of a strong learner for a given task implies the existence of a weak learner for the same task.
Strikingly,
\citet{Schapire90} established the reciprocal: weak learnability implies strong learnability.
The equivalence is realized by \emph{boosting algorithms},
such as the seminal \methodname{AdaBoost} \citep{FreundS97},
which construct a strong PAC learner by repeatedly invoking a weak learner on adaptively reweighted distributions over the training sequence,
and aggregating the returned classifiers into a final hypothesis.
Crucially, the number of calls to the weak learner depends only logarithmically on the sample size.
Namely, $O\Br*{\log(n)/\gamma^2}$ calls suffice for \methodname{AdaBoost} to be a strong PAC learner, with $n_{\methodname{AdaBoost}}(\eps, \delta) = \tilde{O}\Br*{\VCdim(\fH)/(\eps\gamma^2) + \log(1/\delta)/\eps}$ \citep{FreundS97},
where $\tilde{O}(\placeholderarg)$ and analogous notations hide logarithmic factors.
Moreover, the sample complexity of boosting is well settled in the realizable setting to be of order $\Theta\Br*{\VCdim(\fH)/(\eps\gamma^2) + \log(1/\delta)/\eps}$ \citep{LarsenR22}.
This has made realizable boosting highly successful both in theory and in practice,
as witnessed by the rich literature in both directions.
We refer the reader to \citet{boostingbook} for a comprehensive survey.

Despite their success,
\methodname{AdaBoost} and its usual variants turn out to rely heavily on the realizability of the data distribution \citep[Section~12.3]{boostingbook}.
This is a significant shortcoming since real data rarely satisfies \cref{eq:realizability}, due to noise in the labels, or the sheer fact that the hypothesis class considered might lack the capacity to represent the target function, for example.
To accommodate this,
the \emph{agnostic PAC learning} model \citep{Haussler92,KearnsSS94} forgoes \emph{all} assumptions on the data distribution.
Of course, doing so implies that arbitrarily good accuracy may no longer be possible.
Accordingly, an \emph{agnostic PAC learner} is required to find a classifier with arbitrarily small excess error relative to the best performing hypothesis in $\fF$.
More precisely,
for any $\eps, \delta \in (0, 1)$ and any distribution $\dD$ over $\InputSpace \times \Set{\pm 1}$,
with probability at least $1 - \delta$ over the drawing of $\rsS \sim \dD^{n}$, with $n = n(\eps, \delta)$,
the algorithm returns $g\colon \InputSpace \to \Set{\pm 1}$ such that
\begin{align}
    \err_{\dD}(g)
    \le \inf_{f \in \fF} \err_{\dD}(f) + \eps
    .
\end{align}
When $\fF$ and $\dD$ are clear from the context, we let $\errst \coloneqq \inf_{f \in \fF} \err_{\dD}(f)$.

Compared with the realizable counterpart,
the study of boosting in the agnostic setting is more recent,
starting with \citet{Ben-DavidLM01} and including several contributions proposing efficient algorithms for it \citep{KalaiK09,BrukhimCHM20,GhaiS24,GhaiS25}.
Notably,
the sample complexity of agnostic boosting was not well understood until the very recent work of \citet{Us25},
which established a lower bound on the sample complexity of any agnostic boosting algorithm, and proposed an algorithm matching it up to logarithmic factors.
These results are associated with a definition of agnostic weak learner.
The work of \citet{Us25} employs a particularly general one, originally from \citet{GhaiS24}, which we also adopt.{\footnotemark}
\footnotetext{Agnostic boosting lacks uniformity in definitions compared to the realizable case.
We refer the reader to \citep[Appendix~A]{Us25} for a discussion of the nuances of this matter.}

\begin{definition}[Agnostic Weak Learner]\label{def:agnosticweaklearner}
  Let $\gamma_{0}, \eps_{0}, \delta_{0} \in [0, 1]$,
  $m_{0} \in \N$,
  $\fF \subseteq \Set{\pm 1}^{\InputSpace}$,
  and $\fH \subseteq \Set{\pm 1}^{\InputSpace}$.
  A (randomized){\footnotemark} learning algorithm $\Weaklearner\colon (\InputSpace \times \Set{\pm 1})^* \to \fH$ is called a \emph{$(\gamma_{0}, \eps_{0}, \delta_{0}, m_{0})$ agnostic weak learner} for reference class $\fF$ with base class $\fH$
  iff:
  for any distribution $\fD$ over $\InputSpace \times \Set{\pm 1}$,
  given sample $\rsS \sim \fD^{m_{0}}$,
  with probability at least $1 - \delta_{0}$
  over the drawing of $\rsS$
  the hypothesis $\rw = \Weaklearner(\rsS)$ satisfies that
  \begin{align}
    \corr_{\fD}(\rw)
    \coloneqq \Ev_{(\rx, \ry) \sim \fD}{\ry \cdot \rw(\rx)}
    \ge \gamma_{0} \cdot \sup_{f \in \fF} \corr_{\fD}(f) - \eps_{0}
    .
    \label{eq:agnosticweaklearner}
  \end{align}
\end{definition}
\footnotetext{To accommodate for common practices such as the use of random feature selection,
we allow for a randomized weak learner that takes an additional random seed (independent of the sample) as input.
More formally, given a distribution $\dU$ over a set of random seeds, and we consider $\Weaklearner$ such that
\bgroup
\setlength\belowdisplayskip{0pt}
\begin{align}
    \Prob_{\rsS \sim \dD^{m_{0}}, \rb \sim \rU}{\corr_{\dD}(\Weaklearner(\rsS, \rb)) \ge \gamma_{0} \cdot \sup_{f \in \fF} \corr_{\dD}(f) - \eps_{0}}
    \ge 1 - \delta_{0}
    .
\end{align}
\egroup}

The definition above employs the \emph{correlation} of a hypothesis under a distribution $\dD$.
This is meant to ease comparison with previous work, and one can readily translate it to use $\err$ since for any binary-valued hypothesis $h\colon \InputSpace \to \Set{\pm 1}$,
\begin{align}
    \corr_{\dD}(h)
    = 1 - 2 \err_{\dD}(h)
    .
\end{align}
In particular, random guessing yields zero correlation in expectation, while perfect classification achieves correlation of one.

Under \cref{def:agnosticweaklearner},
\citet{Us25} show a hard instance
consisting of
a domain $\InputSpace$,
and hypothesis classes $\fF, \fH \subseteq \Set{\pm 1}^{\InputSpace}$,
for which there exists a $(\gamma_{0}, \eps_{0}, \delta_{0}, m_{0})$ agnostic weak learner for $\fF$ with base class $\fH$,
and, \emph{nonetheless},
ensuring excess error of at most $\eps$ with constant probability requires a sample size of at least
\begin{align}
    \tilde{\Omega}\Br*{\Frac{\VCdim(\fH)}{\gamma_{0}^{2} \eps^{2}}}.
    \label{eq:lb-sample-complexity}
\end{align}
For convenience, we include the full statement of this bound in \cref{app:lowerbound} (\cref{thm:lowerbound}).

Moreover, the authors prove that their proposed algorithm achieves a sample complexity matching \cref{eq:lb-sample-complexity} up to logarithmic factors.
Thus, they nearly settle the sample complexity of agnostic boosting.
However, despite its near-optimal sample complexity,
the algorithm proposed by \citet{Us25} is inherently inefficient in terms of computational effort:
it calls the weak learner (which usually dominates the overall running time) an exponential number of times.
This is in contrast with other works \citep{GhaiS24,GhaiS25,BrukhimCHM20,KalaiK09}, which propose efficient methods, albeit all with at least a $1/(\gamma_{0}\eps)$ overhead in their sample complexity compared to \cref{eq:lb-sample-complexity}.

In this work, we bridge this gap by proposing the first agnostic boosting algorithm (\cref{alg:main}) with near-optimal sample complexity and polynomial running time when the other parameters of the problem is considered fixed.
Namely, it achieves the guarantee in \cref{eq:lb-sample-complexity} while maintaining the running time polynomial in the sample size.
Precisely, we prove the following.
\vspace{0.9cm}
\begin{theorem}[Asymptotic version of \protect\cref{thm:main-full}]\label{thm:main}
    Given $\gamma_{0}, \eps_{0}, \delta_{0} \in (0, 1)$,
    $m_{0} \in \N$,
    and $\fH, \fF \subseteq \Set{\pm 1}^{\InputSpace}$,
    let $\theta = (\gamma_{0} - \eps_{0})/2$,
    let $d$ be the VC dimension of $\fH$ and $d^{*}$ its dual\footnote{See \cref{def:dual-vc}.} VC dimension,
    and let $\Weaklearner$ be a $(\gamma_{0}, \eps_{0}, \delta_{0}, m_{0})$ agnostic weak learner for $\fF$ with base class $\fH$.
    Then,
    for any $\delta \in (0, 1)$,
    $n \in \N$,
    and distribution $\dD$ over $\InputSpace \times \Set{\pm 1}$,
    given $\rsS \sim \dD^{n}$,
    \Cref{alg:main}, when executed on input $(\rsS, \delta, \Weaklearner, \delta_{0}, m_{0}, \theta, d^*)$,
    returns,
    with probability at least $1 - \delta$ over $\rsS$ and the randomness of $\Weaklearner$,
    a classifier $v\colon \InputSpace \to \Set{\pm 1}$ such that
    \begin{align}
        \err_{\dD}(v)
        &\le \errst + O\Br[\bbbbig]{
            \sqrt{\errst \cdot \Frac{d'\Ln\Br{\Frac*{n}{d'}} + \ln\Br{\Frac*{1}{\delta}}}{n}}
            + \Frac{d'\Ln\Br{\Frac*{n}{d'}} + \ln\Br{\Frac*{1}{\delta}}}{n}
        }
        ,
        \label{eq:main-err-bound}
    \end{align}
    where $\errst = \inf_{f \in \fF} \err_{\dD}(f)$
    and $d' = O\Br[\big]{Td\ln\Br{T}}$,
    with $T = O\Br[\big]{\Ceil{\min\Set{\Frac{\ln\Br{n}}{\theta^{2}},\, \Frac[]{d^{*}}{\theta^{2}}}}}$.

    Moreover,
    whenever the above bound is not vacuous,
    i.e., for $n = \Omega(\max\Set{dT,\, \ln\Br{1/\delta}})$,
    \cref{alg:main} invokes $\Weaklearner$ at most
    \begin{math}
        O\Br{n^{m_{0} + 3}}
    \end{math}
    times,
    and the running time of \cref{alg:main} is at most
    \begin{align}
        \operatorname{Eval}_{\fH}(1) \cdot n^{O\Br*{m_{0} \cdot \min\{d^{*},\, \ln\Br{n}\}/\theta^2}}
        ,
    \end{align}
    where $\operatorname{Eval}_{\fH}(1)$ is the time it takes to evaluate a single hypothesis from $\fH$ on a single point.
\end{theorem}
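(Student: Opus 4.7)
My plan is to combine a boosting-style aggregation of weak hypotheses with an enumerative cover that replaces the exponential-time empirical risk minimization of \citet{Us25}, and to conclude with a Bernstein-type generalization bound for VC classes.

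Concretely, I would have \cref{alg:main} fix $T = O(\min\{\ln n, d^{*}\}/\theta^{2})$ boosting rounds and build a pool of candidate weak hypotheses by invoking $\Weaklearner$ on a polynomial-size family of inputs derived from $\rsS$ (at a high level, the $O(n^{m_{0}})$ subsamples of size $m_{0}$, possibly with multiple fresh seeds for confidence amplification, accounting for the $O(n^{m_{0}+3})$ total calls). Every length-$T$ sequence of hypotheses drawn from this pool can be combined into a (weighted) majority vote, defining a class $\fV$ of candidate classifiers of size at most $n^{O(m_{0}T)}$. The algorithm then outputs the empirical risk minimizer $v$ over $\fV$.

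The analysis splits into two parts. \textbf{Existence of a good candidate.} I need to show that some element of $\fV$ has population error close to $\errst$. The plan is to argue that the idealized agnostic booster of \citet{Us25}, applied to $\rsS$, uses at each round $t$ some reweighted version of $\rsS$ as input to $\Weaklearner$; a sampling argument then shows that, with sufficient probability, this distribution is realized by the empirical distribution of some enumerated $m_{0}$-subsample of $\rsS$. A union bound over the $T$ rounds, combined with the high-probability weak-learner guarantee \cref{eq:agnosticweaklearner} (amplified via seed repetition), produces a length-$T$ sequence in $\fV$ whose aggregate correlation matches the ideal booster's, and hence whose empirical (and, via the bound below, population) error is within the stated tolerance of $\errst$.

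\textbf{Generalization via VC with fast rates.} Each element of $\fV$ is a majority vote of $T$ hypotheses from $\fH$; by a standard Sauer--Shelah/composition argument, $\VCdim(\fV) \le d' = O(Td\ln T)$. Applying a classical Bernstein-type generalization bound for ERM over a VC class (with variance controlled by $\errst$) to $v$ then yields the error bound in \cref{eq:main-err-bound}. The $O(n^{m_{0}+3})$ weak-learner invocations and the $\operatorname{Eval}_{\fH}(1) \cdot n^{O(m_{0}T)}$ running time follow by direct bookkeeping, the latter being dominated by evaluating the empirical error of every candidate in $\fV$.

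The main obstacle I anticipate is the existence step: rigorously coupling the idealized boosting procedure with the finite enumeration over $m_{0}$-subsamples, while propagating $\Weaklearner$'s guarantee through all $T$ rounds simultaneously despite its internal randomness. This probably requires a compression- or double-sampling-style argument showing that any boosting distribution the ideal algorithm could encounter is ``witnessed'' by the empirical distribution on some enumerated subsample, together with enough pool repetitions to drive the per-round failure probability below $\delta/T$.
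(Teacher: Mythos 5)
Your high-level outline — pool weak hypotheses from all $m_0$-subsamples with seed amplification, form length-$T$ combinations, return the ERM over that class, and control generalization via a VC bound — matches the paper's architecture. But the \emph{existence step}, which you correctly flag as the main obstacle, is where your proposal diverges and would not go through as sketched.

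First, the "witnessing" idea — that any boosting distribution the ideal procedure encounters "is realized by the empirical distribution of some enumerated $m_0$-subsample" — cannot literally hold: the relevant distributions are reweighted over the training set, while the empirical distribution of an $m_0$-tuple is uniform on $m_0$ points. The paper does something different: it observes that the weak-learner guarantee $\Prob_{\rsS' \sim \dD'^{m_0},\, \rb'}{\cor_{\dD'}(\Weaklearner(\rsS',\rb')) \ge \gamma_0 - \eps_0} \ge 1-\delta_0$ is an average over subsamples and seeds, and since $\dD'$ is supported on the training points, there must \emph{exist} one fixed $m_0$-tuple of training points for which the success probability over the seed alone is at least $1-\delta_0$. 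That tuple is among the enumerated inputs, and $M$ repeated seeds then amplify the per-round confidence. This is a probabilistic-method averaging argument, not a coupling.

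Second, and more fundamentally, you propose to couple with "the idealized agnostic booster of \citet{Us25}," but that algorithm is a non-iterative exhaustive search, not a reweighting booster, and coupling with it provides no margin structure. The paper's key conceptual move, which your proposal omits, is to fix a near-optimal $\fs \in \fF$ and condition on the \emph{realizable} subset $\rsS_{\fs} = \Set{(x,y) \in \rsS_1 \given \fs(x)=y}$. On any distribution supported there, $\sup_{f \in \fF}\cor_{\dD'}(f)=1$, so the agnostic guarantee collapses to a realizable weak learner with advantage $\theta$. One can then simulate classical realizable \methodname{AdaBoost} (\cref{alg:adaboost}) on $\rsS_{\fs}$ and obtain zero $(\theta/2)$-margin loss there, which propagates back to $\dD$ via a law-of-total-probability decomposition around $\fs$. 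Without this conditioning, there is no way to simulate realizable boosting from an agnostic weak learner.

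Third, you assert $T = O(\min\{\ln n,\, d^*\}/\theta^2)$ up front but never explain how $d^*/\theta^2$ hypotheses could suffice when the number of boosting rounds needed for zero margin loss is $R = \Theta(\ln(n)/\theta^2)$. This requires a sparsification step (the paper's \cref{lem:dualVC-dimensionofcombination}): a voting classifier with margin $\theta/2$ over a class of dual VC dimension $d^*$ can be pruned to $O(d^*/\theta^2)$ voters while preserving the sign on all training points. This is precisely why the margin, rather than mere zero-one correctness, is extracted from \methodname{AdaBoost}. Finally, the generalization step is a chain — uniform convergence in one direction over $\sign(\fH^{(T)})$ (to relate the ERM's population loss to its validation loss), ERM optimality, a single-hypothesis Bernstein bound in the other direction, and the existence claim — rather than a one-shot "Bernstein for ERM with variance $\errst$" bound; the latter alone does not yield the claimed $\sqrt{\errst \cdot d'/n}$ rate because the candidate class $\fV$ is data-dependent.
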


Solving for $n$ in \cref{eq:main-err-bound}, and, for simplicity, omitting the $\errst$ term, yields a sample complexity of
\begin{align}
    \tilde{O}\Br*{\Frac{\VCdim(\fH)}{\theta^2 \eps^{2}}},
\end{align}
thus, matching the near-optimality of the algorithm by \citet{Us25}, and matching the lower bound in \cref{eq:lb-sample-complexity} up to logarithmic factors whenever $\theta = \Omega(\gamma_{0})$.
Taking $\errst = \inf_{f \in \fF} \err_{\dD}(f)$ into account reveals a better sample complexity whenever this infimum is small.
Specifically, if $\errst = \tilde{O}(\eps)$, then the sample complexity is of order $\tilde{O}(d/(\theta^2\eps))$, matching the optimal sample complexity for the realizable case up to logarithmic factors.
This ensures that \cref{alg:main} achieves better generalization when the underlying distribution is closer to being realized by the reference class $\fF$.
In fact, the exact lower bound by \citet{Us25} (\cref{thm:lowerbound}) implies that this dependence on the error is optimal up to logarithmic factors.

The core point of \cref{thm:main} is that, while the previous nearly statistically (sample) optimal method, by \citet{Us25}, invokes the weak learner an exponential (in the sample size $n$) number of times, \cref{alg:main} invokes $\Weaklearner$ only a polynomial number of times, leading to a polynomial overall running time.
The parameters in the exponent only depend on the properties of the agnostic weak learner, which may not be so adversarial in many settings of interest.
In particular, the dependence on $d^{*}$ may seem concerning as, in the worst case, it can depend exponentially on $d$ \citep{Assouad83}.
Yet, boosting usually employs simple base classes $\fH$, and for many of those, such as geometrically defined classes in $\R^r$, we have $d^{*}, d \le r$ \citep[paragraph following Claim~1.6]{MoranY16}.
Moreover, for the natural and common choice of halfspaces (with intercept) as base hypotheses, we have $d^{*} \leq d = r + 1$.

Furthermore, $m_{0}$, appearing in the running time of \cref{thm:main} is usually \citep{KalaiK09,GhaiS24,GhaiS25,Us25} thought of as being of order $\tilde{O}\Br[\big]{(d + \ln(1/\delta_{0}))/\eps_{0}^{2}}$, where $ \delta_{0} $ and $ \eps_{0} $ are different from the failure and error parameter $ \delta $ and $ \eps $ of the final classifier.
Thus it is crucial for the running time of \cref{alg:main}, that \cref{alg:main} allows for $ \delta_{0} $ and $ \eps_{0} $ being large. Like the algorithm by \citet{Us25}, our algorithm maintains the desirable property of being able to leverage any non-trivial agnostic weak learner ($\gamma_{0} > \eps_{0}$), allowing for $ \eps_{0} $ and $ \delta_{0} $ being thought of as (large) constants.
This is in contrast with the previous works \citet{KalaiK09,GhaiS24,GhaiS25,Feldman10} which all require constraints on the advantage parameter $ \eps_{0} $ depending on $\eps$.

\section{Related works}\label{sec:related}

Among the several works on agnostic boosting, the literature focuses on different aspects of the problem.
While the works of \citet{Ben-DavidLM01,Gavinsky03,KalaiS05,KalaiMV08,LongS05,LongS08,ChenBC16,BrukhimCHM20} are definitely noteworthy, they are not directly related to ours, so we do not discuss them further.

The works most closely related to ours are those by \citet{GhaiS24,GhaiS25,BrukhimCHM20,KalaiK09,Feldman10}.
Some of these employ definitions of weak learner that differ from ours, making the comparisons more nuanced.
In the following, we omit some minor details to allow for a more fluid and yet fair comparison of the results, and we refer the reader to \citet[Appendix~A]{Us25} for a thorough discussion on this matter.
In particular, since some of the works mentioned do not make the dependence on the VC dimension of the base class explicit, we disregard it in the following.

\citet{KalaiK09} propose an efficient agnostic boosting algorithm with sample complexity of order $\tilde{O}\Br[\big]{1/(\gamma_{0}^{4}\eps^{4})}$.
To ensure that their algorithm achieves an error of $\eps$, they have to set $\eps_{0} = O(\gamma_{0}\eps)$.
We remark that the weak learning guarantee in \citet{KalaiK09} is slightly different from \cref{def:agnosticweaklearner}, since it only requires the weak learner to be distribution-specific as their proposed method is based on re-labeling.

\citet{Feldman10} employs a definition of weak learner that differs from ours more substantially:
for $\alpha \ge \gamma > 0$ and a distribution $\dD$ over $\InputSpace$, they consider a weak learner that, given any re-labeling function $g\colon \InputSpace \to [-1,1]$, whenever $\inf_{f \in \fF} \Prob_{(\rx,\ry) \sim (\dD \times g)}[\big]{f(\rx) = \ry} \le 1/2 - \alpha$, produces a hypothesis $w$ such that $\Prob_{(\rx,\ry) \sim (\dD \times g)}[\big]{w(\rx) \ne \ry} \le 1/2 - \gamma$, where the distribution $\dD \times g$ is such that a point $\rx$ is sampled from $\dD$ and then with probability $(1 + g(\rx))/2$ is set to $+1$, and otherwise set to $-1$.
Given such a weak learner, the agnostic learner proposed by \citet{Feldman10} needs $\alpha = O(\eps)$ and $\tilde{O}(1/\eps^{4} + 1/\gamma^{4} + m_{0}/\gamma^{2})$ samples to guarantee error at most $\eps$, where $m_{0}$ is the sample complexity of their weak learner.

\citet{GhaiS24} first employ the definition of weak learner adopted in this work.
They propose an efficient agnostic boosting method with sample complexity of order $\tilde{O}\Br[\big]{1/(\gamma_{0}^{3}\eps^{3})}$, that to guarantee error at most $\eps$ needs $\eps_{0}$ to be of order $O(\eps)$.
The follow-up work by \citet{GhaiS25} proposes an efficient algorithm that leverages unlabeled data.
They show that $\tilde{O}\Br[\big]{1/(\eps^{3}\gamma_{0}^{3})}$ unlabeled and $\tilde{O}\Br[\big]{1/(\eps^{2}\gamma_{0}^{2})}$ labeled samples, having $\eps_{0} = O(\eps)$, suffice for their algorithm to achieve error at most $\eps$.

The algorithm proposed by \citet{Us25} has essentially the same statistical guarantees as ours, hence it achieves near-optimal sample complexity of $\tilde{O}\Br[\big]{1/(\gamma_{0}^{2}\eps^{2})}$.
However, unlike the methods mentioned above, which have a running time that is polynomial in all parameters, their algorithm has a running time that is exponential in the number of samples.
We remark that the bounds by \citet{Us25} are more general than ours, allowing for weak learners with base classes of real-valued hypotheses of finite fat-shattering dimension.
Moreover, their method does not require knowledge of $\theta$, whereas ours and the algorithms mentioned above require that to obtain the stated sample complexities.

\section{Notation and Preliminaries}\label{sec:notation}

We denote by $\N$ the set of positive integers, and, given $n \in \N$, we define $[n] = \{1, \ldots, n\}$.
We define $\{\pm 1\} = \{-1, +1\}$.
For sets $A$ and $B$, we let $B^{A}$ denote the set of all functions from $A$ to $B$.
We denote the set of all probability distributions over $\sA$ by $\DistribOver(\sA)$, the uniform distribution over $\sA$ by $\Uniform(\sA)$ whenever $A$ is finite, and the set of all finite sequences of elements of $\sA$ by $\sA^* = \bigcup_{n=0}^\infty \sA^n$.
We define $\Ln(x) = \ln\Br[\big]{\max\Set{x, e}}$, and $\sign(x) = \indicator{x \geq 0} - \indicator{x < 0}$, so that $\sign(0) = 1$.
With a slight abuse of notation, we extend this notation to real-valued functions and sets of such functions by letting $\sign(f) = \sign \circ f$ and $\sign(\fF) = \Set{\sign(f) \given f \in \fF}$.
Given $f, g \in \mathbb{R}^{\InputSpace}$ and $\alpha, \beta \in \R$, we write $(\alpha f + \beta g)(x) = \alpha f(x) + \beta g(x)$.
Let $n, \ell \in \N$, for any sequence $\sS = (s_1, \ldots, s_n)$ and any vector of indices $I \in [n]^\ell$, we define $\sS\restriction{I} \coloneqq (s_{I_1}, \ldots, s_{I_\ell})$, the subsequence of $\sS$ indexed by $I$.
Throughout, we use boldface letters to denote random variables, .e.g., $\rS$ is a random training sequence and $S$ is a deterministic training sequence.

For simplicity, we always assume that the input space $\InputSpace$ is countable.
We say that $\fH \subseteq \Set{\pm 1}^{\InputSpace}$ \emph{shatters} a set $\Set{x_1, \ldots, x_d} \subseteq \InputSpace$ iff for every possible labeling $\vy \in \Set{\pm 1}^{d}$ there exists $h_{\vy} \in \fH$ such that $h_{\vy}(x_i) = y_i$ for all $i \in [d]$.
The \emph{Vapnik-Chervonenkis (VC) dimension} of $\fH$, denoted by $\VCdim(\fH)$, is the size of the largest set shattered by $\fH$, or $\infty$ if $\fH$ shatters arbitrarily large sets.
This is a standard measure of complexity for classes of binary classifiers.
We also consider the VC dimension of dual classes, defined as follows.

\begin{definition}[Dual class]\label{def:dual-vc}
  Let $\fH \subseteq \Set{\pm 1}^{\InputSpace}$.
  We define \emph{dual} of $\fH$ as
  \begin{align}
    \fH^{*}
    \coloneqq \Set[\big]{h_x\colon \fH \to \Set{\pm 1} \text{ given by } h_x(h) = h(x) \given h \in \fH,\, x \in \InputSpace}
    .
  \end{align}
\end{definition}
The \emph{dual VC dimension} of $\fH$ is defined as $\VCdim(\fH^{*})$.

Given $T \in \N$ and $\fF \subseteq \mathbb{R}^{\InputSpace}$, we let $\fF^{(T)}$ be the class of $T$-wise averages of $\fF$.
That is,
\begin{align}
  \fF^{(T)}
  \coloneqq \Set[\bbbig]{\Frac{1}{T} \sum_{t \in [T]} f_t \given f_t \in \fF \text{ for all } t \in [T]}
  .
\end{align}
Let $\dD \in \DistribOver(\InputSpace \times \Set{\pm 1})$.
For $h \in \Set{\pm 1}^{\InputSpace}$ we define
\begin{align}
  \err_{\dD}(h)
  \coloneqq \Prob_{(\rx, \ry) \sim \dD}[\big]{h(\rx) \ne \ry}
  \qquad\text{and}\qquad
  \corr_{\dD}(h)
  \coloneqq \Ev_{(\rx, \ry) \sim \dD}[\big]{h(\rx) \cdot \ry}
  .
\end{align}
For $\lambda \ge 0$ and $g\in \R^{\InputSpace}$, we define the $\lambda$-margin loss of a $g$ with respect to $\dD$ as
\begin{align}
  \Loss_{\dD}^\lambda(g)
  &\coloneqq \Prob_{(\rx, \ry) \sim \dD}[\big]{\ry \cdot g(\rx) \le \lambda}
  ,
\end{align}
with the shorthand $\Loss_{\dD}(g) \coloneqq \Loss_{\dD}^0(g)$.
Note that $\err_{\dD}(\sign(g)) = \Loss_{\dD}(\sign(g))$.
For a sequence $\sS \in (\InputSpace \times \Set{\pm 1})^*$, we slightly abuse the notation and write $\err_{\sS}(\placeholderarg)$, $\corr_{\sS}(\placeholderarg)$, and $\Loss^\lambda_{\sS}(\placeholderarg)$ in place of $\err_{\Uniform(\sS)}(\placeholderarg)$, $\corr_{\Uniform(\sS)}(\placeholderarg)$, and $\Loss_{\Uniform(\sS)}^\lambda(\placeholderarg)$, respectively.

Finally, given $\dD \in \DistribOver(\InputSpace \times \Set{\pm 1})$ and finite $\fH \subseteq \R^\InputSpace$, we let $\argmin_{h \in \fH} \ls_{\dD}(h) \coloneqq \Set{h^* \in \fH \given \ls_{\dD}(h^*) = \min_{h \in \fH} \ls_{\dD}(h)}$.

\section{Algorithm and Overview of Analysis}
\label{sec:proofsketch}

In this section, we present our algorithm and discuss the ideas behind its formal guarantees.
We sketch the proof of \cref{thm:main}, deferring the rigorous demonstration to \cref{sec:proof}.
For simplicity, we assume $n$ to be even and, in this section, we omit most logarithmic factors.

\begin{algorithm2e}[ht]
  \DontPrintSemicolon
  \caption{Agnostic Boosting Algorithm}\label{alg:main}
  \SetKwInOut{Input}{Input}\SetKwInOut{Output}{Output}
  \Input{Training sequence $\sS = \Br[\big]{(x_1, y_1), \ldots, (x_n, y_n)}$,
    failure probability $\delta$,
    agnostic weak learner $\Weaklearner$,
    its failure probability $\delta_0$,
    sample complexity $m_0$,
    margin parameter $\theta = (\gamma_0 - \eps_0)/2$,
    and dual VC dimension $d^{*}$ of the base class underlying $\Weaklearner$
  }
  $R \gets \Ceil{\Frac*{\ln(n)}{\theta^{2}}}$ \tcp*{Number of rounds}
  $M \gets \Ceil[\bbig]{\Frac*{\ln(\Frac{5R}{\delta})}{\ln(\Frac{1}{\delta_{0}})}}$ \tcp*{Number of hypotheses per round}
  $T \gets \Ceil*{\min\Set*{R,\, 260^2\Frac*{4d^{*} + 2}{\theta^{2}}}}$ \tcp*{Number of hypotheses in combination}
  $\sS_{1} \gets \Br[\big]{(x_1, y_1), \ldots, (x_{n/2}, y_{n/2})}$ \tcp*{First half of training sequence}
  $\sS_{2} \gets \Br[\big]{(x_{n/2+1}, y_{n/2+1}), \ldots, (x_n, y_n)}$ \tcp*{Second half of training sequence}
  \For{$r \gets 1$ \KwTo $R$}{ \label{line:main:for-R}
    $\fB_r \gets \emptyset$ \\
    \For{$I \in [n/2]^{m_{0}}$}{ \label{line:main:for-I}
      \For{$m \gets 1$ \KwTo $M$}{ \label{line:main:for-M}
        $\rb_{r,m,I} \sim \dU$ \tcp*{Sample random seed}
        $\rh_{r,m,I} \gets \Weaklearner(\sS_{1}\restriction{I}, \rb_{r,m,I})$ \label{line:main:call-W} \\
        $\rfB_r \gets \rfB_r \cup \Set{\rh_{r,m,I}}$ \label{line:main:add-to-B} \\
      }
    }
  }
  $\rfB \gets \cup_{r=1}^{R} \rfB_r$ \\
  \tcp{Return a minimizer of $\ls_{\sS_2}$ among all sign of averages of $T$ hypotheses in $\rfB$}
  \Return Classifier $\rv \in \argmin\Set[\big]{\ls_{\sS_{2}}(\rh) \given \rh \in \sign(\rfB^{(T)})}$ \label{line:main:return} \\
\end{algorithm2e}

Given data sequence $\rsS \sim \dD^n$,
\Cref{alg:main} starts by splitting the training data into two equal parts $\rsS_{1}$ and $\rsS_{2}$ (so, throughout the paper, we assume that $n$ is even).
This reflects the two-phase nature of the algorithm:
first, it generates a diverse set of hypotheses by invoking the weak learner on various subsamples of $\rsS_{1}$;
second, it identifies the best combination of these hypotheses according to their performance on $\rsS_{2}$, which serves as a validation set.

More precisely, we show that the set of hypotheses generated in the first phase is such that, with high probability, there exists a combination of $T$ of these hypotheses that achieves a near-optimal error, where by ``combination'' we mean the sign of the average of the $T$ hypotheses.
To see this, let $\fs$ be a near-optimal hypothesis in $\fF$, meaning that $\ls_{\dD}(\fs) \le \errst + 1/n$.
Letting $\dD_{\fs}$ be the conditional distribution of $\dD$ given that $\fs(\rx) = \ry$,
the law of total probability implies that we can decompose the loss of any $v\colon \InputSpace \to \Set{\pm 1}$ as follows:
\begin{align}
  \ls_{\dD}(v)
  &\le \ls_{\dD}(\fs) + \ls_{\dD_{\fs}}(v) \cdot \Prob_{(\rx, \ry) \sim \dD}{\fs(\rx) = \ry}
  \\&\le \errst + \frac{1}{n} + \ls_{\dD_{\fs}}(v) \cdot \Prob_{(\rx, \ry) \sim \dD}{\fs(\rx) = \ry}
  ,
  \label{eq:sketch:decomposition}
\end{align}
where the last inequality follows from the choice of $\fs$.
Thus, if we can find $v_\mathrm{g} \in \sign(\fB^{(T)})$ such that $\ls_{\dD_{\fs}}(v_\mathrm{g})\Prob_{(\rx, \ry) \sim \dD}{\fs(\rx) = \ry} = \tilde{O}(d'/n)$, where $d' = \VCdim\Br[\big]{\sign(\fH^{(T)})} = \tilde{O}(dT)$ (\cref{lem:RcombinationVC}),
we can conclude that
\begin{align}
  \ls_{\dD}(v_\mathrm{g})
  \le \errst + \tilde{O}\Br*{\frac{d'}{n}}
  .
  \label{eq:sketch:start}
\end{align}
For comparison, \cref{thm:main} states that the output $v$ of \Cref{alg:main} satisfies that
\begin{align}
  \ls_{\dD}(v)
  \le \errst + \tilde{O}\Br[\bbbig]{\sqrt{\Frac{\errst \cdot d'}{n}} + \Frac{d'}{n}}
  .
  \label{eq:sketch:goal}
\end{align}
To bridge the gap between the last two inequalities, we first employ a uniform convergence argument (\cref{lem:maurerandpontil}) over $\sign(\fH^{(T)})$ to establish that
\begin{align}
  \ls_{\dD}(v)
  &= \ls_{\rsS_{2}}(v) + \tilde{O}\Br[\bbbig]{\sqrt{\frac{\ls_{\rsS_{2}}(v) d'}{n}} + \frac{d'}{n}}
  .
\end{align}
Moreover, as $v$ is the empirical risk minimizer on $\rsS_{2}$ over $\sign(\fB^{(T)})$, which contains $v_\mathrm{g}$,
\begin{align}
  \ls_{\rsS_{2}}(v)
  &\le \ls_{\rsS_{2}}(v_\mathrm{g})
  .
\end{align}
Next, a Bernstein deviation bound (\cref{lem:shaisbernstein}) implies that
\begin{align}
  \ls_{\rsS_{2}}(v_\mathrm{g})
  &= \ls_{\dD}(v_\mathrm{g}) + \tilde{O}\Br[\bbbig]{\sqrt{\frac{\ls_{\dD}(v_\mathrm{g}) d'}{n}} + \frac{d'}{n}}
  .
\end{align}
Finally, since $v_\mathrm{g}$ is close to optimal (\cref{eq:sketch:start}), we have that
\begin{align}
  \ls_{\dD}(v_\mathrm{g})
  &= \errst + \tilde{O}\Br[\bbbig]{\sqrt{\frac{\ls_{\dD}(f) d'}{n}} + \frac{d'}{n}}
  .
\end{align}
Chaining these inequalities yields \cref{eq:sketch:goal}, our goal.

Altogether, the above reduces the problem to showing the existence of $v_\mathrm{g} \in \sign(\fB^{(T)})$ such that $\ls_{\dD_{\fs}}(v) \cdot \Prob_{(\rx, \ry) \sim \dD}{\fs(\rx) = \ry}$ is of order $\tilde{O}(d'/n)$.
Since this is immediate if $\Prob_{(\rx, \ry) \sim \dD}{\fs(\rx) = \ry} \le O\Br[\big]{\ln(1/\delta)/n}$, we assume otherwise and proceed to show the existence of $v_\mathrm{g}\in\sign(\fB^{(T)})$ such that $\ls_{\dD_{\fs}}(v_\mathrm{g}) = \tilde{O}\Br[\big]{d'/(n\Prob_{(\rx, \ry) \sim \dD}{\fs(\rx) = \ry})}$.
To do so, we start by proving that the set of hypotheses $\fB$ is sufficiently rich to simulate the behavior of \methodname{AdaBoost}{\footnotemark} running on input sequence $\rsS_{\fs} \coloneqq \Set{(x, y) \in \rsS_{1} \given \fs(x) = y}$.%
\footnotetext{More accurately, since we consider the average of hypotheses, we employ a variation of \methodname{AdaBoost} that outputs a uniformly weighted majority of weak hypotheses (cf.\ \Cref{alg:adaboost}).
  This is in contrast with the standard weighted majority voting of \methodname{AdaBoost}.
}
The goal is to exploit the fact that, in the realizable setting, running \methodname{AdaBoost} for sufficiently many rounds yields a classifier with large margins on the training data.
More concretely, this would allow us to construct a classifier that achieves zero $(\theta/2)$-margin loss on $\rsS_{\fs}$ with $\theta = (\gamma_{0} - \eps_{0})/2$.
That is, as long as we can simulate the realizable setting by providing \methodname{AdaBoost} with hypotheses with advantage $\theta$ each time it calls the weak learner, it will produce a classifier $v_\mathrm{ada}$ such that $\ls_{\rsS_{\fs}}^{\theta/2}(v_\mathrm{ada}) = 0$.

We argue that we can indeed find a suitable hypothesis in $\fB$ for each call to the weak learner by noticing that, since $\fs \in \fF$, the agnostic weak learning guarantee ensures that for any $\dD'\in \DistribOver(\rsS_{\fs})$ we have that
\begin{math}
  \Prob_{\rsS \sim \dD'^{m_{0}}, \rb}{\corr_{\dD'}(\Weaklearner(\rsS, \rb))
  \ge \gamma_{0} - \eps_{0}} \ge 1 - \delta_{0}
  ,
\end{math}
or, using that $\corr_{\dD'}(h) = 1 - 2 \ls_{\dD'}(h)$ for $h \in \Set{\pm 1}^\InputSpace$, that
\begin{align}
  \Prob_{\rsS \sim \dD'^{m_{0}}, \rb}{\ls_{\dD'}(\Weaklearner(\rsS, \rb)) \le 1/2 - \theta}
  \ge 1 - \delta_{0}
  .
\end{align}
Since $\rb$ and $\rsS$ are independent, this also implies that for any $\dD' \in \DistribOver(\rsS_{\fs})$ there exists a subsample $S' \in \rsS_{\fs}^{m_{0}}$ such that
\begin{align}
  \Prob_{\rb}{\err_{\dD'}(\Weaklearner(S', \rb)) \le 1/2 - \theta}
  \ge 1 - \delta_{0}
  .
\end{align}
Moreover, as $\rsS_{\fs}^{m_{0}} \subseteq\rsS_{1}^{m_{0}}$ and the \KwSty{for} loop at \cref{line:main:for-I} ranges over all $\rsS_{1}^{m_{0}}$, some iteration of \cref{line:main:call-W} will consider $S'$.
Finally, we amplify this probability by calling the weak learner $M = \Theta(\ln(R/\delta)/\ln(1/\delta_{0}))$ times on each subsample of size $m_{0}$, ensuring a high probability that we can successfully run \methodname{AdaBoost} on $\rsS_{\fs}$ for $R$ rounds while simulating the realizable case by using hypotheses from $\fB$.

The argument so far shows that if we were to set $T = R$ and explore all hypotheses in $\sign(\fB^{(T)})$, we would likely find a good classifier $v_\mathrm{ada}$ with zero $(\theta/2)$-margin loss on $\rsS_{\fs}$.
However, the size of $\sign(\fB^{(T)})$ is exponential in $T$, and $R$ can be as large as $O(\ln(n)/\theta^{2})$.
This is why we argued for $v_\mathrm{ada}$ having zero $(\theta/2)$-margin loss when simply having zero loss ($0$-margin) would suffice to obtain \cref{eq:sketch:goal}.
We leverage the extra margin to show that $v_\mathrm{ada}$ can be ``pruned'' down to a combination of $R^* = \tilde{O}(d^{*}/\theta^{2})$ hypotheses from $\fB$ and still obtain zero loss on $\rsS_{\fs}$, where $d^{*}$ is the dual VC dimension of $\fB$.
Thus, there exists $v_\mathrm{pruned} \in \sign(\fB^{(R^*)})$ such that $\ls_{\rsS_{\fs}}(v_\mathrm{pruned}) = 0$.
Accordingly, we set $T = \Ceil{\min\{R^*, R\}} = \tilde{O}\Br[\big]{\min\Set{d^*, \ln(n)}/\theta^{2}}$ to ensure that, with high probability, $\sign(\fB^{(T)})$ contains a hypothesis $v_\mathrm{g}$ such that $\ls_{\rsS_{\fs}}(v_\mathrm{g}) = 0$, be it $\sign(v_\mathrm{pruned})$ or $\sign(v_\mathrm{ada})$.

From here,
as we are considering the case where $\Prob_{(\rx, \ry) \sim \dD}{\fs(\rx) = \ry} = \Omega\Br[\big]{\ln(1/\delta)/n}$, a Chernoff bound implies that, with high probability, $\abs{\rsS_{\fs}} = \Theta(n \Prob_{(\rx, \ry) \sim \dD}{\fs(\rx) = \ry})$.
Finally, applying a realizable uniform convergence bound on $\sign(\fH^{(T)})$ yields that $\ls_{\dD_{\fs}}(v_\mathrm{g})$ is of order $\tilde{O}\Br[\big]{d'/(n\Prob_{(\rx, \ry) \sim \dD}{\fs(\rx) = \ry})}$, concluding the proof.

\section{Conclusion}\label{sec:conclusion}

In this work, we presented an agnostic boosting algorithm that achieves near-optimal sample complexity and a running time polynomial in the sample size, when the other parameters of the problem are considered fixed.
Namely, we show that the running time of \cref{alg:main} is polynomial in the sample size.
An interesting question is whether there exists a statistically optimal (up to logarithmic factors) algorithm whose running time is fully polynomial in all parameters.

\acks{While this work was carried out, Andrea Paudice was supported by the Novo Nordisk Foundation Start Package Grant No. NNF24OC0094365 (Actionable Performance Guarantees in Machine Learning).

While this work was carried out,
Mikael M{\o}ller H{\o}gsgaard was supported by an Internationalisation Fellowship from the Carlsberg Foundation.
Furthermore,
Mikael M{\o}ller H{\o}gsgaard and Arthur da Cunha were supported by the European Union (ERC, TUCLA, 101125203).
Views and opinions expressed are however those of the author(s) only and do not necessarily reflect those of the European Union or the European Research Council.
Neither the European Union nor the granting authority can be held responsible for them.
Lastly, Mikael M{\o}ller H{\o}gsgaard and Arthur da Cunha were also supported by Independent Research Fund Denmark (DFF) Sapere Aude Research Leader grant No.\ 9064-00068B.
}

\bibliography{ref.bib}

\begin{thebibliography}{26}
\providecommand{\natexlab}[1]{#1}
\providecommand{\url}[1]{\texttt{#1}}
\expandafter\ifx\csname urlstyle\endcsname\relax
  \providecommand{\doi}[1]{doi: #1}\else
  \providecommand{\doi}{doi: \begingroup \urlstyle{rm}\Url}\fi

\bibitem[Assouad(1983)]{Assouad83}
Patrick Assouad.
\newblock Densit{\'e} et dimension.
\newblock In \emph{Annales de l'institut Fourier}, volume~33, pages 233--282,
  1983.

\bibitem[Ben{-}David et~al.(2001)Ben{-}David, Long, and Mansour]{Ben-DavidLM01}
Shai Ben{-}David, Philip~M. Long, and Yishay Mansour.
\newblock Agnostic boosting.
\newblock In David~P. Helmbold and Robert~C. Williamson, editors,
  \emph{Computational Learning Theory, 14th Annual Conference on Computational
  Learning Theory, {COLT} 2001 and 5th European Conference on Computational
  Learning Theory, EuroCOLT 2001, Amsterdam, The Netherlands, July 16-19, 2001,
  Proceedings}, volume 2111 of \emph{Lecture Notes in Computer Science}, pages
  507--516. Springer, 2001.
\newblock \doi{10.1007/3-540-44581-1\_33}.
\newblock URL \url{https://doi.org/10.1007/3-540-44581-1\_33}.

\bibitem[Brukhim et~al.(2020)Brukhim, Chen, Hazan, and Moran]{BrukhimCHM20}
Nataly Brukhim, Xinyi Chen, Elad Hazan, and Shay Moran.
\newblock Online agnostic boosting via regret minimization.
\newblock In Hugo Larochelle, Marc'Aurelio Ranzato, Raia Hadsell,
  Maria{-}Florina Balcan, and Hsuan{-}Tien Lin, editors, \emph{Advances in
  Neural Information Processing Systems 33: Annual Conference on Neural
  Information Processing Systems 2020, NeurIPS 2020, December 6-12, 2020,
  virtual}, 2020.
\newblock URL
  \url{https://proceedings.neurips.cc/paper/2020/hash/07168af6cb0ef9f78dae15739dd73255-Abstract.html}.

\bibitem[Chen et~al.(2016)Chen, Balcan, and Chau]{ChenBC16}
Shang{-}Tse Chen, Maria{-}Florina Balcan, and Duen~Horng Chau.
\newblock Communication efficient distributed agnostic boosting.
\newblock In Arthur Gretton and Christian~C. Robert, editors, \emph{Proceedings
  of the 19th International Conference on Artificial Intelligence and
  Statistics, {AISTATS} 2016, Cadiz, Spain, May 9-11, 2016}, volume~51 of
  \emph{{JMLR} Workshop and Conference Proceedings}, pages 1299--1307.
  JMLR.org, 2016.
\newblock URL \url{http://proceedings.mlr.press/v51/chen16e.html}.

\bibitem[{da Cunha} et~al.(2025){da Cunha}, H{\o}gsgaard, Paudice, and
  Sun]{Us25}
Arthur {da Cunha}, Mikael~M{\o}ller H{\o}gsgaard, Andrea Paudice, and Yuxin
  Sun.
\newblock Revisiting agnostic boosting.
\newblock \emph{CoRR}, abs/2503.09384, 2025.
\newblock \doi{10.48550/ARXIV.2503.09384}.
\newblock URL \url{https://doi.org/10.48550/arXiv.2503.09384}.

\bibitem[Feldman(2010)]{Feldman10}
Vitaly Feldman.
\newblock Distribution-specific agnostic boosting.
\newblock In Andrew~Chi{-}Chih Yao, editor, \emph{Innovations in Computer
  Science - {ICS} 2010, Tsinghua University, Beijing, China, January 5-7, 2010.
  Proceedings}, pages 241--250. Tsinghua University Press, 2010.
\newblock URL
  \url{http://conference.iiis.tsinghua.edu.cn/ICS2010/content/papers/20.html}.

\bibitem[Freund and Schapire(1997)]{FreundS97}
Yoav Freund and Robert~E. Schapire.
\newblock A decision-theoretic generalization of on-line learning and an
  application to boosting.
\newblock \emph{J. Comput. Syst. Sci.}, 55\penalty0 (1):\penalty0 119--139,
  1997.
\newblock \doi{10.1006/JCSS.1997.1504}.
\newblock URL \url{https://doi.org/10.1006/jcss.1997.1504}.

\bibitem[Gavinsky(2003)]{Gavinsky03}
Dmitry Gavinsky.
\newblock Optimally-smooth adaptive boosting and application to agnostic
  learning.
\newblock \emph{J. Mach. Learn. Res.}, 4:\penalty0 101--117, 2003.
\newblock URL \url{https://jmlr.org/papers/v4/gavinsky03a.html}.

\bibitem[Ghai and Singh(2024)]{GhaiS24}
Udaya Ghai and Karan Singh.
\newblock Sample-efficient agnostic boosting.
\newblock In Amir Globersons, Lester Mackey, Danielle Belgrave, Angela Fan,
  Ulrich Paquet, Jakub~M. Tomczak, and Cheng Zhang, editors, \emph{Advances in
  Neural Information Processing Systems 38: Annual Conference on Neural
  Information Processing Systems 2024, NeurIPS 2024, Vancouver, BC, Canada,
  December 10 - 15, 2024}, 2024.
\newblock URL
  \url{http://papers.nips.cc/paper\_files/paper/2024/hash/b63a24a1832bd14fa945c71f535c0095-Abstract-Conference.html}.

\bibitem[Ghai and Singh(2025)]{GhaiS25}
Udaya Ghai and Karan Singh.
\newblock Sample-optimal agnostic boosting with unlabeled data.
\newblock \emph{CoRR}, abs/2503.04706, 2025.
\newblock \doi{10.48550/ARXIV.2503.04706}.
\newblock URL \url{https://doi.org/10.48550/arXiv.2503.04706}.

\bibitem[Haussler(1992)]{Haussler92}
David Haussler.
\newblock Decision theoretic generalizations of the {PAC} model for neural net
  and other learning applications.
\newblock \emph{Inf. Comput.}, 100\penalty0 (1):\penalty0 78--150, 1992.
\newblock \doi{10.1016/0890-5401(92)90010-D}.
\newblock URL \url{https://doi.org/10.1016/0890-5401(92)90010-D}.

\bibitem[Kalai and Kanade(2009)]{KalaiK09}
Adam Kalai and Varun Kanade.
\newblock Potential-based agnostic boosting.
\newblock In Yoshua Bengio, Dale Schuurmans, John~D. Lafferty, Christopher
  K.~I. Williams, and Aron Culotta, editors, \emph{Advances in Neural
  Information Processing Systems 22: 23rd Annual Conference on Neural
  Information Processing Systems 2009. Proceedings of a meeting held 7-10
  December 2009, Vancouver, British Columbia, Canada}, pages 880--888. Curran
  Associates, Inc., 2009.
\newblock URL
  \url{https://proceedings.neurips.cc/paper/2009/hash/13f9896df61279c928f19721878fac41-Abstract.html}.

\bibitem[Kalai and Servedio(2005)]{KalaiS05}
Adam~Tauman Kalai and Rocco~A. Servedio.
\newblock Boosting in the presence of noise.
\newblock \emph{J. Comput. Syst. Sci.}, 71\penalty0 (3):\penalty0 266--290,
  2005.
\newblock \doi{10.1016/J.JCSS.2004.10.015}.
\newblock URL \url{https://doi.org/10.1016/j.jcss.2004.10.015}.

\bibitem[Kalai et~al.(2008)Kalai, Mansour, and Verbin]{KalaiMV08}
Adam~Tauman Kalai, Yishay Mansour, and Elad Verbin.
\newblock On agnostic boosting and parity learning.
\newblock In Cynthia Dwork, editor, \emph{Proceedings of the 40th Annual {ACM}
  Symposium on Theory of Computing, Victoria, British Columbia, Canada, May
  17-20, 2008}, pages 629--638. {ACM}, 2008.
\newblock \doi{10.1145/1374376.1374466}.
\newblock URL \url{https://doi.org/10.1145/1374376.1374466}.

\bibitem[Kearns and Valiant(1989)]{KearnsV89}
Michael~J. Kearns and Leslie~G. Valiant.
\newblock Cryptographic limitations on learning boolean formulae and finite
  automata.
\newblock In David~S. Johnson, editor, \emph{Proceedings of the 21st Annual
  {ACM} Symposium on Theory of Computing, May 14-17, 1989, Seattle, Washington,
  {USA}}, pages 433--444. {ACM}, 1989.
\newblock \doi{10.1145/73007.73049}.
\newblock URL \url{https://doi.org/10.1145/73007.73049}.

\bibitem[Kearns et~al.(1994)Kearns, Schapire, and Sellie]{KearnsSS94}
Michael~J. Kearns, Robert~E. Schapire, and Linda Sellie.
\newblock Toward efficient agnostic learning.
\newblock \emph{Mach. Learn.}, 17\penalty0 (2-3):\penalty0 115--141, 1994.
\newblock \doi{10.1007/BF00993468}.
\newblock URL \url{https://doi.org/10.1007/BF00993468}.

\bibitem[Larsen and Ritzert(2022)]{LarsenR22}
Kasper~Green Larsen and Martin Ritzert.
\newblock Optimal weak to strong learning.
\newblock In Sanmi Koyejo, S.~Mohamed, A.~Agarwal, Danielle Belgrave, K.~Cho,
  and A.~Oh, editors, \emph{Advances in Neural Information Processing Systems
  35: Annual Conference on Neural Information Processing Systems 2022, NeurIPS
  2022, New Orleans, LA, USA, November 28 - December 9, 2022}, 2022.
\newblock URL
  \url{http://papers.nips.cc/paper\_files/paper/2022/hash/d38653cdaa8e992549e1e9e1621610d7-Abstract-Conference.html}.

\bibitem[Long and Servedio(2005)]{LongS05}
Philip~M. Long and Rocco~A. Servedio.
\newblock Martingale boosting.
\newblock In Peter Auer and Ron Meir, editors, \emph{Learning Theory, 18th
  Annual Conference on Learning Theory, {COLT} 2005, Bertinoro, Italy, June
  27-30, 2005, Proceedings}, volume 3559 of \emph{Lecture Notes in Computer
  Science}, pages 79--94. Springer, 2005.
\newblock \doi{10.1007/11503415\_6}.
\newblock URL \url{https://doi.org/10.1007/11503415\_6}.

\bibitem[Long and Servedio(2008)]{LongS08}
Philip~M. Long and Rocco~A. Servedio.
\newblock Adaptive martingale boosting.
\newblock In Daphne Koller, Dale Schuurmans, Yoshua Bengio, and L{\'{e}}on
  Bottou, editors, \emph{Advances in Neural Information Processing Systems 21,
  Proceedings of the Twenty-Second Annual Conference on Neural Information
  Processing Systems, Vancouver, British Columbia, Canada, December 8-11,
  2008}, pages 977--984. Curran Associates, Inc., 2008.
\newblock URL
  \url{https://proceedings.neurips.cc/paper/2008/hash/38b3eff8baf56627478ec76a704e9b52-Abstract.html}.

\bibitem[Maurer and Pontil(2009)]{Maurer2009EmpiricalBB}
Andreas Maurer and Massimiliano Pontil.
\newblock Empirical bernstein bounds and sample-variance penalization.
\newblock In \emph{Annual Conference Computational Learning Theory}, 2009.
\newblock URL \url{https://api.semanticscholar.org/CorpusID:17090214}.

\bibitem[Moran and Yehudayoff(2016)]{MoranY16}
Shay Moran and Amir Yehudayoff.
\newblock Sample compression schemes for {VC} classes.
\newblock \emph{J. {ACM}}, 63\penalty0 (3):\penalty0 21:1--21:10, 2016.
\newblock \doi{10.1145/2890490}.
\newblock URL \url{https://doi.org/10.1145/2890490}.

\bibitem[Rebeschini(2021)]{AFoL_Lecture05}
Patrick Rebeschini.
\newblock {Lecture Notes in Algorithmic Foundations of Learning: Covering
  Numbers Bounds for Rademacher Complexity. Chaining}, December 2021.
\newblock URL
  \url{https://www.stats.ox.ac.uk/~rebeschi/teaching/AFoL/22/material/lecture05.pdf}.
\newblock Lecture 5 (Chaining), Department of Statistics, University of Oxford.
  Version of December 8, 2021.

\bibitem[Schapire(1990)]{Schapire90}
Robert~E. Schapire.
\newblock The strength of weak learnability.
\newblock \emph{Mach. Learn.}, 5:\penalty0 197--227, 1990.
\newblock \doi{10.1007/BF00116037}.
\newblock URL \url{https://doi.org/10.1007/BF00116037}.

\bibitem[Schapire and Freund(2012)]{boostingbook}
Robert~E. Schapire and Yoav Freund.
\newblock \emph{Boosting: Foundations and Algorithms}.
\newblock The MIT Press, 05 2012.
\newblock ISBN 9780262301183.
\newblock \doi{10.7551/mitpress/8291.001.0001}.
\newblock URL \url{https://doi.org/10.7551/mitpress/8291.001.0001}.

\bibitem[Shalev{-}Shwartz and Ben{-}David(2014)]{UML}
Shai Shalev{-}Shwartz and Shai Ben{-}David.
\newblock \emph{Understanding Machine Learning - From Theory to Algorithms}.
\newblock Cambridge University Press, 2014.
\newblock ISBN 978-1-10-705713-5.
\newblock URL
  \url{http://www.cambridge.org/de/academic/subjects/computer-science/pattern-recognition-and-machine-learning/understanding-machine-learning-theory-algorithms}.

\bibitem[Valiant(1984)]{Valiant84}
Leslie~G. Valiant.
\newblock A theory of the learnable.
\newblock \emph{Communications of the ACM}, 27\penalty0 (11):\penalty0
  1134–1142, November 1984.
\newblock ISSN 0001-0782.
\newblock \doi{10.1145/1968.1972}.
\newblock URL \url{https://doi.org/10.1145/1968.1972}.

\end{thebibliography}

\appendix
\section{Formal proof of \texorpdfstring{\Cref{thm:main}}{main Theorem}}\label{sec:proof}
In this section, we prove the following theorem, which readily implies \Cref{thm:main}.

\begin{theorem}\label{thm:main-full}
    Given $\gamma_{0}, \eps_{0}, \delta_{0} \in (0, 1)$,
    $m_{0} \in \N$,
    and $\fH, \fF \subseteq \Set{\pm 1}^{\InputSpace}$,
    let $\theta = (\gamma_{0} - \eps_{0})/2$,
    let $d$ be the VC dimension of $\fH$ and $d^{*}$ its dual VC dimension,
    and let $\Weaklearner$ be a $(\gamma_{0}, \eps_{0}, \delta_{0}, m_{0})$ agnostic weak learner for $\fF$ with base class $\fH$.
    Then,
    for any $\delta \in (0, 1)$,
    $n \in \N$,
    and distribution $\dD$ over $\InputSpace \times \Set{\pm 1}$,
    given $\rsS \sim \dD^{n}$,
    \Cref{alg:main}, when executed on input $(\rsS, \delta, \Weaklearner, \delta_{0}, m_{0}, \theta, d^*)$,
    returns,
    with probability at least $1 - \delta$ over $\rsS$ and the randomness of $\Weaklearner$,
    a classifier $v\colon \InputSpace \to \Set{\pm 1}$ such that
    \begin{align}
        \err_{\dD}(v)
        &\le
            \errst
            + 10\sqrt{\errst \cdot \Frac{d'\Ln\Br{\Frac*{10en}{d'}} + \ln\Br{\Frac*{5}{\delta}}}{n}}
            + 182 \cdot \Frac{d'\Ln\Br{\Frac*{10en}{d'}} + 4\ln\Br{\Frac*{5}{\delta}}}{n}
            ,
        \label{eq:main-full--err-bound}
    \end{align}
    where $\errst = \inf_{f \in \fF} \ls_{\dD}(f)$,
    and $d' = 4Td\ln\Br{4eT}$,
    with $T = \Ceil{\min\Set{\Frac{\ln\Br{n}}{\theta^{2}},\, 260^2\Frac[\Br]{4d^{*} + 2}{\theta^{2}}}}$.

    Moreover,
    whenever the above bound is not vacuous,
    i.e., for $n = \Omega(\max\Set{dT,\, \ln\Br{1/\delta}})$,
    \cref{alg:main} invokes $\Weaklearner$ at most
    \begin{math}
        O\Br{n^{m_{0} + 3}}
    \end{math}
    times,
    and the running time of \cref{alg:main} is at most
    \begin{align}
        \operatorname{Eval}_{\fH}(1) \cdot n^{O\Br*{m_{0} \cdot \min\{d^{*},\, \ln\Br{n}\}/\theta^2}}
        ,
    \end{align}
    where $\operatorname{Eval}_{\fH}(1)$ is the time it takes to evaluate a single hypothesis from $\fH$ on a single point.
\end{theorem}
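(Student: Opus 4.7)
The plan is to formalize the three-part strategy laid out in Section~\ref{sec:proofsketch}. I would split the failure budget $\delta$ into five slots of size $\delta/5$ and handle, respectively: (a) the Chernoff estimate on $|\rsS_{\fs}|$; (b) the AdaBoost simulation succeeding at every round; (c) the realizable uniform convergence transporting $\ls_{\rsS_{\fs}}(v_{\mathrm{g}})=0$ to $\ls_{\dD_{\fs}}(v_{\mathrm{g}})$; (d) the Maurer-Pontil convergence of $\ls_{\rsS_2}(\rv)$ to $\ls_\dD(\rv)$; and (e) a Bernstein bound for the fixed hypothesis $v_{\mathrm{g}}$.

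First I would fix $\fs \in \fF$ with $\ls_\dD(\fs) \le \errst + 1/n$ and let $p \coloneqq \Prob_{(\rx,\ry) \sim \dD}\SBr{\fs(\rx) = \ry}$. The decomposition \cref{eq:sketch:decomposition} makes the regime $p = O(\ln(1/\delta)/n)$ trivial, so the work lies in the complementary regime, where a multiplicative Chernoff bound gives $|\rsS_{\fs}| = \Theta(np)$ with high probability, writing $\rsS_{\fs} \coloneqq \Set{(x,y) \in \rsS_1 \given \fs(x) = y}$. The AdaBoost simulation is the core step: at round $r$ the reweighting induces some $\dD'_r \in \DistribOver(\rsS_{\fs})$; since $\fs \in \fF$, the weak-learner guarantee combined with independence between the sample and the random seed $\rb$ yields a fixed $S_r \in \rsS_{\fs}^{m_0} \subseteq \rsS_1^{m_0}$ on which $\Weaklearner$ produces a $\theta$-advantage hypothesis with probability $\ge 1 - \delta_0$ over $\rb$. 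The inner loop over $I \in [n/2]^{m_0}$ enumerates $S_r$, and the $M$ repetitions amplify this success probability past $1 - \delta/(5R)$ per round; union bounding over $R$ rounds shows that, with probability $\ge 1 - \delta/5$, the set $\rfB$ is rich enough to run uniform-aggregation AdaBoost on $\rsS_{\fs}$, producing some $v_{\mathrm{ada}} \in \sign(\rfB^{(R)})$ with $\ls^{\theta/2}_{\rsS_{\fs}}(v_{\mathrm{ada}}) = 0$.

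The main obstacle is pruning: $R$ can be as large as $\Ln(n)/\theta^2$, but the VC dimension of $\sign(\fH^{(T)})$ grows linearly in $T$, so the statistical bound forces $T = O(\min\{d^*, \ln n\}/\theta^2)$. I would exploit the $(\theta/2)$-margin of $v_{\mathrm{ada}}$: subsampling $R^* = \tilde{O}(d^*/\theta^2)$ of its weak components uniformly at random and taking the sign preserves a positive margin on every point of $\rsS_{\fs}$ with positive probability. The key ingredient is a Sauer-Shelah bound on the projection of $\rfB$ onto $\rsS_{\fs}$, whose cardinality is at most $(e|\rsS_{\fs}|/d^*)^{d^*}$, combined with a Hoeffding-type deviation and a union bound over this projection. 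This yields $v_{\mathrm{pruned}} \in \sign(\rfB^{(R^*)})$ with $\ls_{\rsS_{\fs}}(v_{\mathrm{pruned}}) = 0$, so setting $T = \min\{R, R^*\}$ guarantees some $v_{\mathrm{g}} \in \sign(\rfB^{(T)})$ satisfying $\ls_{\rsS_{\fs}}(v_{\mathrm{g}}) = 0$. A realizable uniform convergence bound over $\sign(\fH^{(T)})$ with VC dimension $d' = O(Td \ln T)$ then converts this to $\ls_{\dD_{\fs}}(v_{\mathrm{g}}) = \tilde{O}(d'/(np))$, and \cref{eq:sketch:decomposition} gives $\ls_\dD(v_{\mathrm{g}}) \le \errst + \tilde{O}(d'/n)$.

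Finally, for the ERM $\rv$ over $\sign(\rfB^{(T)}) \subseteq \sign(\fH^{(T)})$, I would chain three standard bounds on the fresh validation sample $\rsS_2$: a Maurer-Pontil uniform convergence bound transports $\ls_{\rsS_2}(\rv)$ to $\ls_\dD(\rv)$; ERM optimality gives $\ls_{\rsS_2}(\rv) \le \ls_{\rsS_2}(v_{\mathrm{g}})$; and a Bernstein bound on the single hypothesis $v_{\mathrm{g}}$ brings $\ls_{\rsS_2}(v_{\mathrm{g}})$ back to $\ls_\dD(v_{\mathrm{g}})$. Chaining these with the $\errst$-close bound from the previous paragraph and solving the resulting quadratic inequality in $\ls_\dD(\rv)$ produces \cref{eq:main-full--err-bound} after collecting the five failure events. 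The running-time bookkeeping then follows the loop structure: the two nested outer loops invoke $\Weaklearner$ exactly $R \cdot (n/2)^{m_0} \cdot M = O(n^{m_0+3})$ times, giving $|\rfB| = O(n^{m_0+3})$; the ERM step enumerates at most $|\rfB|^T$ combinations and evaluates each on $\rsS_2$ in time $O(nT \cdot \operatorname{Eval}_{\fH}(1))$, yielding the claimed $\operatorname{Eval}_{\fH}(1) \cdot n^{O(m_0 \min\{d^*, \ln n\}/\theta^2)}$ total runtime.
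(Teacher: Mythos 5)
Your proposal tracks the paper's proof almost verbatim: the five-fold failure-budget split, the AdaBoost simulation over $\rsS_{\fs}$ together with dual-VC pruning to get $T = \min\{R,\,\tilde{O}(d^{*}/\theta^{2})\}$, and the three-step chain (Maurer--Pontil over $\sign(\fH^{(T)})$, ERM optimality, Bernstein on the fixed $v_{\mathrm{g}}$) are exactly the paper's events $E_{1,1}$, $E_{1,3}$, $E_{1,2}$, $E_{3}$, $E_{2}$. The only cosmetic deviations are that the paper proves the pruning lemma via a Rademacher/McDiarmid uniform-convergence bound on the dual class rather than a bare Sauer--Shelah-plus-Hoeffding union bound (and the Sauer--Shelah bound you quote should be over the $L$ sampled hypotheses, not over $\abs{\rsS_{\fs}}$), and that no quadratic in $\ls_{\dD}(\rv)$ actually needs solving, since the Maurer--Pontil bound keeps the empirical loss under the square root so the inequalities chain monotonically.
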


The proof leverages a few known lemmas.
Some of the statements differ slightly from the original, so, for completeness, we provide proofs for these in the \cref{app:auxiliary}.

The first lemma is a uniform error bound on a hypothesis class with bounded VC dimension, which also encompasses the empirical loss for each hypothesis.
This lemma follows from \citet[Theorem~6]{Maurer2009EmpiricalBB} by using the fact that an $\ell_{\infty}$-cover of the hypothesis class for any precision over $n$ points has size at most $(en/d)^{d}$, by the Sauer-Shelah lemma.

\begin{lemma}[Corollary of \protect{\citet[Theorem~6]{Maurer2009EmpiricalBB}}]\label{lem:maurerandpontil}
    Let $n \in \N$,
    $\delta \in (0, 1)$,
    and $\dD \in \DistribOver(\InputSpace \times \{\pm 1\})$,
    and let $\fH \subseteq \Set{\pm 1}^\InputSpace$ have VC dimension $d$.
    Then, it holds with probability at least $1 - \delta$ over $\rsS \sim \dD^{n}$ that for all $h \in \fH$,
    \begin{align}
        \ls_{\dD}(h)
        &\leq \ls_{\rsS}(h) + \sqrt{\frac{18\ls_{\rsS}(h)\Br[\big]{d\Ln(20en/d) + \ln(1/\delta)}}{n}} + \frac{15\Br[\big]{d\Ln(20en/d) + \ln(1/\delta)}}{n}
        .
    \end{align}
\end{lemma}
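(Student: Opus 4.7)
The plan is to derive this uniform empirical Bernstein bound from \citet[Theorem~6]{Maurer2009EmpiricalBB}, which provides data-dependent concentration for a single $[0,1]$-valued random variable, by reducing the infinite class $\fH$ to a finite effective class via the Sauer--Shelah lemma. First, I would introduce a ghost sample $\rsS' \sim \dD^{n}$ independent of $\rsS$, and consider the induced $\Set{0,1}$-valued loss class $\Set{\ell_h \given h \in \fH}$, where $\ell_h(x, y) = \indicator{h(x) \ne y}$. Since $\ell_h$ is a coordinate-wise relabeling of $h$ by $y$, the loss class has VC dimension exactly $d$, and so by Sauer--Shelah its restriction to the joint $2n$-point sample realizes at most $N \coloneqq (2en/d)^{d}$ distinct labelings.

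Next, conditionally on the joint sample, I would apply the cited theorem to each of the $N$ distinct projected loss vectors with failure probability $\delta/N$, and take a union bound. This yields that, simultaneously for every $h \in \fH$, $\abs{\ls_{\rsS}(h) - \ls_{\rsS'}(h)}$ is bounded by $\sqrt{c_1 \hat V(\ell_h) K/n} + c_2 K/n$, where $K = d\ln(2en/d) + \ln(1/\delta)$, $\hat V$ denotes the sample variance, and $c_1, c_2$ are absolute constants. A standard symmetrization step then replaces $\ls_{\rsS'}(h)$ by $\ls_{\dD}(h)$ at the cost of doubling constants, and the binary-loss variance bound $\hat V(\ell_h) \le \ls_{\rsS}(h)(1 - \ls_{\rsS}(h)) \le \ls_{\rsS}(h)$ replaces the variance factor with the empirical loss itself.

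The main obstacle is purely the bookkeeping of constants; no new conceptual ingredient is needed. The exact values $18$, $15$, and $20$ in the statement must be obtained by carefully propagating Maurer and Pontil's original constants ($2$ under the square root and $7/3$ in the linear term), the factor of $2$ from symmetrization, the factor $n/(n-1)$ from sample versus plug-in variance, and the slack $(1 - \ls_{\rsS}(h)) \le 1$. The use of $\Ln(20en/d)$ rather than $\ln(2en/d)$ simultaneously absorbs the extra multiplicative constant from symmetrization into the logarithm's argument and guarantees that the bound remains nontrivial in the degenerate regime where $2en/d < e$, at which point the linear term $15 K/n$ by itself already dominates the trivial upper bound $\ls_{\dD}(h) \le 1$.
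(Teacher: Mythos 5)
You have misidentified what Maurer and Pontil's Theorem~6 says: it is not a concentration bound for a single $[0,1]$-valued random variable but is already the \emph{uniform} empirical Bernstein bound over any $[0,1]$-valued function class whose $\ell_\infty$-covering number on a double sample at scale $1/n$ is bounded by some $\mathcal{M}(n) \ge 10$, with conclusion $\sqrt{18\,V_n(f)\ln(\mathcal{M}(n)/\delta)/n} + 15\ln(\mathcal{M}(n)/\delta)/n$. The constants $18$ and $15$ in the lemma are taken verbatim from that statement, not propagated from the single-variable constants $2$ and $7/3$ (those belong to Theorem~4 of the same paper). The ghost-sample, projection-counting, and symmetrization machinery you outline is precisely Maurer and Pontil's own route from Theorem~4 to Theorem~6, so your proposal re-proves the cited result rather than applying it. The intended derivation is a direct plug-in: the loss class $\{(x,y)\mapsto\indicator{h(x)\ne y} : h\in\fH\}$ is $\{0,1\}$-valued with VC dimension $d$, hence any $\ell_\infty$-cover at scale $<1$ on a double sample equals the set of distinct projections, of size at most $(2en/d)^d$ by Sauer--Shelah; the $\Ln(20en/d)$ in the lemma absorbs both this and the floor $\mathcal{M}(n)\ge 10$; and for $\{0,1\}$-valued $\ell_h$ the sample variance $V_n(\ell_h) = \tfrac{n}{n-1}\ls_{\rsS}(h)\bigl(1-\ls_{\rsS}(h)\bigr)\le\ls_{\rsS}(h)$ because $\ls_{\rsS}(h)$ is a multiple of $1/n$.

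Even viewed as a standalone re-derivation, the step ``apply the cited theorem to each of the $N$ distinct projected loss vectors with failure probability $\delta/N$'' does not hold up: after conditioning on the double sample, those projections are deterministic vectors, and the remaining randomness is over which coordinates fall in $\rsS$ versus the ghost sample. Controlling that requires a permutation or Rademacher tail inequality, not an i.i.d.\ empirical Bernstein bound for a fresh sample; closing this gap is the actual technical content of Maurer and Pontil's proof of Theorem~6, and it is exactly the work the paper sidesteps by invoking Theorem~6 as a black box.
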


The next lemma is a version of the previous one that holds only for a single fixed hypothesis, but yields better concentration.

\begin{lemma}[\citet{UML}, Lemma B.10]\label{lem:shaisbernstein}
    Let $n \in \N$, $\delta \in (0, 1)$, and $\dD \in \DistribOver(\InputSpace \times \{\pm 1\})$.
    Then, it holds for any fixed $g\colon \InputSpace \to \{\pm 1\}$ that with probability at least $1 - \delta$ over $\rsS \sim \dD^{n}$,
    \begin{align}
        \ls_{\rsS}(g)
        &\leq \ls_{\dD}(g) + \sqrt{\frac{2\ls_{\dD}(g)\ln(1/\delta)}{3n}} + \frac{2\ln(1/\delta)}{n}
        .
    \end{align}
\end{lemma}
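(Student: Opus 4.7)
The plan is to treat the lemma as a textbook one-sided Bernstein concentration bound for a sample mean of i.i.d.\ Bernoulli random variables; indeed, the statement is imported verbatim from \citet{UML}, and I would reproduce its proof only for self-containedness. First, I would introduce the per-sample indicators $\rZ_i \coloneqq \indicator{g(\rx_i) \ne \ry_i}$, where $(\rx_i, \ry_i)$ denotes the $i$-th coordinate of $\rsS \sim \dD^{n}$. Because $g$ is fixed, the $\rZ_i$ are i.i.d.\ Bernoulli random variables with mean $p \coloneqq \ls_{\dD}(g)$, with variance $p(1-p) \le p$, and bounded almost surely by $1$. Moreover, $\ls_{\rsS}(g) = \tfrac{1}{n}\sum_{i=1}^{n} \rZ_i$, so proving the lemma reduces to an upper-tail deviation bound for this empirical mean.

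Second, I would apply the standard one-sided Bernstein inequality to $\tfrac{1}{n}\sum_{i} \rZ_i$: for every $t \ge 0$,
\begin{align*}
  \Prob{\tfrac{1}{n}\textstyle\sum_{i=1}^{n} \rZ_i - p \ge t}
  \;\le\; \exp\!\left(-\frac{n t^{2}}{2 p(1-p) + \tfrac{2}{3} t}\right)
  \;\le\; \exp\!\left(-\frac{n t^{2}}{2p + \tfrac{2}{3} t}\right),
\end{align*}
setting the right-hand side equal to $\delta$. This yields a quadratic inequality in $t$ whose positive root gives the sub-Gaussian-plus-sub-exponential decomposition $t \le \sqrt{2 p \ln(1/\delta)/n} + \Theta\bigl(\ln(1/\delta)/n\bigr)$ that is characteristic of Bernstein-type bounds.

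Third, I would do the minor arithmetic rearrangement needed to match the precise constants in the statement, namely $\sqrt{2\ls_{\dD}(g)\ln(1/\delta)/(3n)}$ in the variance term and $2\ln(1/\delta)/n$ in the linear term. This amounts to a few elementary estimates (e.g.\ $\sqrt{a+b} \le \sqrt{a} + \sqrt{b}$ and a mild AM--GM relaxation that trades a multiplicative factor inside the square root against an additive constant in the linear correction). The only obstacle is purely bookkeeping: keeping track of these constants so that the final bound exactly matches the form stated in the lemma. No conceptual ingredients beyond Bernstein's inequality are required.
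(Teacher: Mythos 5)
Your steps 1--2 are the right (and standard) route: the paper itself offers no proof of this lemma --- it is imported by citation --- and the intended derivation is exactly what you describe, namely one-sided Bernstein applied to the i.i.d.\ indicators $\rZ_i = \indicator{g(\rx_i)\ne \ry_i}$ with mean $p=\ls_{\dD}(g)$ and variance at most $p$. Solving the resulting quadratic gives, with probability at least $1-\delta$, $\ls_{\rsS}(g) \le p + \sqrt{2p\ln(1/\delta)/n} + 2\ln(1/\delta)/(3n)$, which is the form of the cited Lemma B.10 of \citet{UML}. The genuine gap is your step 3. The statement you are asked to match has the factor $3$ \emph{inside the square root}, i.e.\ $\sqrt{2p\ln(1/\delta)/(3n)} + 2\ln(1/\delta)/n$, and no ``mild AM--GM relaxation'' turns the Bernstein form into this one: writing $a=\ln(1/\delta)/n$, the inequality $\sqrt{2pa}+\tfrac{2a}{3} \le \sqrt{2pa/3}+2a$ holds only when $p \lesssim 5a$. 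For $p\gg \ln(1/\delta)/n$ the target is strictly smaller than what Bernstein yields, and reducing the coefficient of $\sqrt{pa}$ by a constant factor necessarily costs an additive term proportional to $p$ itself (a multiplicative $(1+\eps)\ls_{\dD}(g)$ relaxation), not just a larger multiple of $\ln(1/\delta)/n$.

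Moreover, this is not a defect you can repair by a cleverer argument: with the $3$ under the square root the statement is false in that regime. Take $p=0.3$, $n=10^6$, $\delta=10^{-3}$. The claimed deviation is $\sqrt{2p\ln(1/\delta)/(3n)}+2\ln(1/\delta)/n \approx 1.19\times 10^{-3}$, while the standard deviation of $\ls_{\rsS}(g)$ is $\sqrt{p(1-p)/n}\approx 4.6\times 10^{-4}$, so the threshold sits at $z\approx 2.6$ standard deviations; the Gaussian tail there is about $4.7\times 10^{-3}$ and the Berry--Esseen error is below $6\times 10^{-4}$, so the exceedance probability is at least $4\times 10^{-3} > \delta$. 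In other words, the lemma as printed transposes the factor $3$ relative to what Bernstein (and the cited UML Lemma B.10) actually gives. The fix is to prove and use the form $\ls_{\rsS}(g)\le \ls_{\dD}(g)+\sqrt{2\ls_{\dD}(g)\ln(1/\delta)/n}+2\ln(1/\delta)/(3n)$ --- which your steps 1--2 already establish --- and note that the downstream application (the event $E_2$ in the proof of \cref{thm:main-full}) then goes through with only slightly different absolute constants. What you should not claim is that the printed constants follow by ``purely bookkeeping'' estimates; they do not.
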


Next, we provide guarantees on \cref{alg:adaboost}, which is a version of \methodname{AdaBoost} that aggregates hypotheses via simple averaging (equal weighting).
This is in contrast with classical \methodname{AdaBoost}, which weights hypotheses according to their empirical error.
Also, notice that \cref{alg:adaboost} is not called by \cref{alg:main}.
Hence, it does not actually run and serves only as a theoretical device for the analysis.

\begin{algorithm2e}[ht]
  \DontPrintSemicolon
  \caption{Boosting Algorithm}\label{alg:adaboost}
  \SetKwInOut{Input}{Input}\SetKwInOut{Output}{Output}
  \Input{Training sequence $S = \Br[\big]{(x_{1},y_{1}), \ldots, (x_{n},y_{n})}$ and edge parameter $\theta$.}
  $R \ge \Ceil[\big]{\ln(2n)/\theta^{2}}$ \tcp*{Number of rounds}
  $D_{1}\gets (1/n, 1/n, \ldots, 1/n) \in [0,1]^{n}$ \label{line:adaboost:initD} \tcp*{Initial distribution (uniform)}
  $\alpha \gets \frac{1}{2} \ln\Br[\bbig]{\frac{1/2+\theta}{1/2-\theta}}$ \tcp*{Weighting factor}
  \For{$r \gets 1$ \KwTo $R$}{
     Receive a hypothesis $h_{r} \colon \InputSpace \to \Set{\pm 1}$ such that $\ls_{D_{r}}(h_{r}) \le 1/2 - \theta$ \label{line:adaboost:get-weak-hypothesis}\\
     $Z_{r} \gets \sum_{i=1}^{n} D_{r,i} \cdot \exp\Br[\big]{-\alpha y_{i} h_{r}(x_{i})}$ \label{line:adaboost:Z} \tcp*{Normalization factor}
     $D_{r+1,i} \gets D_{r,i} \cdot \exp\Br[\big]{-\alpha y_{i} h_{r}(x_{i})}/Z_{r}$ for $i = 1, \ldots, n$ \label{line:adaboost:newD} \tcp*{New distribution}
  }
  \Return Voting classifier $\frac{1}{R}\sum_{r=1}^{R} h_{r}$ \label{line:adaboost:return}
\end{algorithm2e}

Our next lemma states that if \Cref{line:adaboost:get-weak-hypothesis} always outputs a weak classifier with $\theta$-margin, then the final classifier also has margin of order $\Theta(\theta)$ on the whole input dataset.

\begin{lemma}\label{lem:adaboostconvergence}
    Let $n \in \N$ and $\theta \in (0, 1/2)$.
    In the execution of \Cref{alg:adaboost} on inputs $\sS \in (\InputSpace \times \Set{\pm 1})^{n}$ and $\theta$,
    if \cref{line:adaboost:get-weak-hypothesis} succeeds each time, then for any integer $R \ge \Ceil{\ln(2n)/\theta^{2}}$, we have that the output classifier $h = \sum_{r = 1}^{R} h_{r}/R$ satisfies that $\ls_{\sS}^{\theta/2}(h) = 0$.
\end{lemma}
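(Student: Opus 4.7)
The plan is to adapt the classical potential-function argument of AdaBoost to this uniform-averaging variant, in which every round uses the same weight $\alpha = \tfrac{1}{2}\ln\tfrac{1/2+\theta}{1/2-\theta}$. Writing $h = \tfrac{1}{R}\sum_{r=1}^R h_r$, the first step is a Markov-style bound: for every training index $i$, the margin condition $y_i h(x_i) \le \theta/2$ is equivalent to $\exp(-\alpha y_i \sum_r h_r(x_i)) \ge \exp(-\alpha R \theta/2)$ (since $\alpha > 0$ and $y_i \sum_r h_r(x_i) = R \cdot y_i h(x_i)$), which gives the pointwise bound $\mathbf{1}\{y_i h(x_i) \le \theta/2\} \le e^{\alpha R \theta/2} \exp(-\alpha y_i \sum_r h_r(x_i))$.

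Next, I would unroll the distribution recursion of \Cref{line:adaboost:newD} to obtain $D_{R+1,i} = \tfrac{1}{n\prod_r Z_r}\exp(-\alpha y_i \sum_r h_r(x_i))$; summing over $i$ and using $\sum_i D_{R+1,i} = 1$ then yields $\sum_i \exp(-\alpha y_i \sum_r h_r(x_i)) = n \prod_r Z_r$. Plugging this into the pointwise bound and dividing by $n$ gives $\ls_{\sS}^{\theta/2}(h) \le e^{\alpha R \theta/2} \prod_r Z_r$. The weak-learner guarantee $\ls_{D_r}(h_r) \le 1/2 - \theta$ together with the explicit value of $\alpha$ yields, writing $\epsilon_r = \ls_{D_r}(h_r)$, $Z_r = \epsilon_r e^\alpha + (1-\epsilon_r) e^{-\alpha} \le \sqrt{1-4\theta^2}$ by a direct calculation (the right-hand side is increasing in $\epsilon_r$ when $\alpha > 0$, so it is maximized at $\epsilon_r = 1/2 - \theta$, where the two square roots collapse).

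The core analytic step---and the main obstacle---is to show $e^{\alpha \theta/2}\sqrt{1-4\theta^2} \le e^{-\theta^2}$, equivalently $\alpha\theta + \ln(1-4\theta^2) \le -2\theta^2$ for all $\theta \in (0, 1/2)$. I plan to establish this in two moves: first, the bound $\alpha = \operatorname{arctanh}(2\theta) \le \tfrac{2\theta}{1-4\theta^2}$ (which follows from the monotonicity of the integrand $1/(1-t^2)$ defining arctanh); second, substituting this into the derivative of $\phi(\theta) \coloneqq \alpha\theta + \ln(1-4\theta^2) + 2\theta^2$ shows $\phi'(\theta) \le 0$ on $(0, 1/2)$, and since $\phi(0) = 0$, the inequality follows. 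Chaining the three bounds then gives $\ls_{\sS}^{\theta/2}(h) \le e^{-R\theta^2} \le e^{-\ln(2n)} = 1/(2n)$ whenever $R \ge \lceil \ln(2n)/\theta^2 \rceil$; since $n\,\ls_{\sS}^{\theta/2}(h)$ is a non-negative integer strictly less than $1$, it must equal $0$, concluding the proof.
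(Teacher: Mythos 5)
Your proof is correct and follows the same overall skeleton as the paper's: Markov-style exponential bound, unrolling $D_{R+1}$ to get $\ls_\sS^{\theta/2}(h) \le e^{\alpha R\theta/2}\prod_r Z_r$, the bound $Z_r \le \sqrt{1-4\theta^2}$, and a final analytic inequality combined with the observation that an empirical loss strictly below $1/n$ on $n$ points is exactly $0$. The only place you genuinely diverge is the analytic step. The paper shows the (slightly stronger) inequality $e^{\alpha\theta/2}\sqrt{1-4\theta^2} \le 1-\theta^2$ by writing the left-hand side as $2(1/2+\theta)^{1/2+\theta/4}(1/2-\theta)^{1/2-\theta/4}$, forming the difference $f$ of the logarithms, and showing $f\le 0$ via an explicit second-derivative computation ($f''\le 0$, $f'(0)=0$, $f(0)=0$). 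You instead target $e^{\alpha\theta/2}\sqrt{1-4\theta^2} \le e^{-\theta^2}$ directly, which is all that is needed, and prove $\phi(\theta) := \alpha\theta + \ln(1-4\theta^2) + 2\theta^2 \le 0$ by a first-derivative argument together with the clean inequality $\alpha = \operatorname{arctanh}(2\theta) \le 2\theta/(1-4\theta^2)$ (from bounding the increasing integrand of $\operatorname{arctanh}$). Indeed, with $\alpha'(\theta) = 2/(1-4\theta^2)$ one gets $\phi'(\theta) = \alpha(\theta) - 6\theta/(1-4\theta^2) + 4\theta \le 4\theta\bigl(1 - 1/(1-4\theta^2)\bigr) \le 0$, and $\phi(0)=0$ closes the argument. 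This is a genuine simplification: it replaces the paper's laborious second-derivative computation with a one-line arctanh bound and a first-derivative check, at the cost of establishing only the weaker (but sufficient) estimate $\le e^{-\theta^2}$ rather than $\le 1-\theta^2$.
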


The next lemma asserts about voting classifiers composed of hypotheses from a class with bounded dual VC dimension.
It states that if such a classifier has large margins on a set of examples, then it can be pruned down to a classifier that is correct on those examples, where, crucially, the number of hypotheses in this pruned voting classifier is independent of the number of points in the training sequence.

\begin{lemma}\label{lem:dualVC-dimensionofcombination}
    Let $n \in \N$,
    $\theta > 0$,
    and $\sS = \Br[\big]{(x_{1}, y_{1}), \ldots, (x_{n}, y_{n})}$,
    and given $\fH \subseteq \Set{\pm 1}^\InputSpace$ with dual VC dimension $d^{*}$,
    let $v = \sum_{h \in \fH} \alpha_{h}h$ with $\sum_{h \in \fH} \alpha_{h} = 1$ and $\alpha_{h} \geq 0$ for all $h \in \fH$.
    If $v(x)y \geq \theta$ for all $(x, y) \in \sS$,
    then for $L = \Ceil{130^2(4d^{*} + 2)/\theta^{2}}$,
    there exists $h_{1}, \ldots, h_{L} \in \fH$ such that $\tilde{v} = \frac{1}{L}\sum_{i = 1}^{L} h_{i}$ satisfies that $\tilde{v}(x)y > 0$ for all $(x, y) \in \sS$, and $\alpha_{h_{i}} > 0$ for $i \in [L]$.
\end{lemma}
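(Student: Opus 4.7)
The plan is to use the probabilistic method together with a uniform convergence bound over the dual class. Let $\mathbf{h}_1, \ldots, \mathbf{h}_L$ be drawn i.i.d.\ from the probability distribution on $\fH$ induced by $\alpha$, and set $\tilde{\mathbf{v}} = \frac{1}{L}\sum_{i=1}^L \mathbf{h}_i$. Because $\alpha$ is supported on $\{h \in \fH : \alpha_h > 0\}$, every realization of this sampling automatically satisfies $\alpha_{h_i} > 0$ for all $i \in [L]$, so it suffices to show that with positive probability $\tilde{\mathbf{v}}(x)y > 0$ holds simultaneously for all $(x,y) \in \sS$.

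Since $\mathbb{E}[\mathbf{h}_i(x)] = v(x)$, we have $\mathbb{E}[\tilde{\mathbf{v}}(x)y] = v(x)y \geq \theta$ by hypothesis. Thus any failure $\tilde{\mathbf{v}}(x)y \leq 0$ forces $|\tilde{\mathbf{v}}(x) - v(x)| \geq \theta$, and it is enough to bound the uniform deviation
$$\Pr\!\left[\sup_{(x,y) \in \sS} |\tilde{\mathbf{v}}(x) - v(x)| \geq \theta\right] < 1.$$
The key observation is that for each $x \in \sS$, the evaluation functional $\hat{x}\colon h \mapsto h(x)$ is precisely an element of the dual class $\fH^{*}$, and $\tilde{\mathbf{v}}(x) - v(x) = \frac{1}{L}\sum_{i=1}^L \hat{x}(\mathbf{h}_i) - \mathbb{E}_{\mathbf{h}\sim\alpha}[\hat{x}(\mathbf{h})]$. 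Hence the supremum is a uniform empirical-mean deviation over the subclass $\{\hat{x} : x \in \sS\} \subseteq \fH^{*}$, a class of $\{\pm 1\}$-valued functions of VC dimension at most $d^{*}$, evaluated on $L$ i.i.d.\ samples.

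A standard VC-type uniform convergence bound for $\{\pm 1\}$-valued classes (via symmetrization and Dudley's chaining integral, with carefully tracked constants) yields a bound of the form $\mathbb{E}\!\left[\sup_x |\tilde{\mathbf{v}}(x) - v(x)|\right] \leq C\sqrt{(4d^{*}+2)/L}$ for some absolute constant $C \leq 130$. Applying Markov's inequality to this nonnegative supremum shows that the deviation exceeds $\theta$ with probability at most $C\sqrt{(4d^{*}+2)/L}/\theta$, which is strictly less than $1$ precisely when $L \geq 130^{2}(4d^{*}+2)/\theta^{2}$, matching the choice in the statement. The probabilistic method then produces a deterministic realization $h_1, \ldots, h_L$ in the support of $\alpha$ such that $\tilde{v} = \frac{1}{L}\sum_i h_i$ satisfies $\tilde{v}(x)y > 0$ for all $(x,y) \in \sS$.

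The main obstacle is producing the VC uniform convergence bound with the exact numerical coefficient $130^2$ and the specific combination $4d^{*} + 2$, rather than merely the correct scaling $O(d^{*}/\theta^{2})$. This is a bookkeeping exercise inside a Rademacher/chaining argument (avoiding any spurious $\log L$ factor that a crude Massart-plus-Sauer--Shelah bound would introduce). The structural part of the proof---using the probabilistic method, reducing ``$\tilde{v}(x)y > 0$ for all $(x,y) \in \sS$'' to a uniform deviation event, and recognizing $\{\hat{x} : x \in \sS\}$ as a subset of $\fH^{*}$ so that the dual VC dimension controls the complexity---is classical.
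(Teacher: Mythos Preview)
Your proposal is correct and takes essentially the same approach as the paper: sample $h_1,\ldots,h_L$ i.i.d.\ from $\alpha$, apply a VC uniform-convergence bound over the dual class to control $\sup_x |\tilde v(x)-v(x)|$, and conclude by the probabilistic method. The paper differs only in bookkeeping---it invokes a high-probability bound with $\delta=1/2$ (its \cref{lem:uniformconvergence}) rather than Markov on the expectation, and makes the factor $4d^{*}+2$ explicit by applying that bound to the class $\{h\mapsto yh(x):(x,y)\in\InputSpace\times\{\pm1\}\}$, which has VC dimension at most $2d^{*}+1$.
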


Above, having every $\alpha_{h_{i}}$ strictly positive ensures that $\tilde{v}$ is a ``pruning'' of $v$, consisting of hypotheses present in the convex combination associated with $v$.

Lastly, the next lemma states that if a class $\fH$ has VC dimension $d$, then the class of $T$-wise averages of hypotheses from $\fH$ has VC dimension at most $O(Td\log(T))$.

\begin{lemma}\label{lem:RcombinationVC}
    Let $\fH \subseteq \Set{\pm 1}^\InputSpace$, and $T \in \N$.
    If $\fH$ has VC dimension $d$,
    then the class $\sign(\fH^{(T)}) = \Set{\sign(g) \given g = \tfrac{1}{T}\sum_{i = 1}^{T} h_{i} \text{ with } h_{1}, \ldots, h_{T} \in \fH}$ has VC dimension at most $4Td\ln(4eT)$.
\end{lemma}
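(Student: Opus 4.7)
The plan is to combine the Sauer-Shelah bound on the growth function of $\fH$ with a counting argument for $T$-tuples of labelings, then show that the resulting growth bound on $\sign(\fH^{(T)})$ precludes shattering sets of size larger than $4Td\ln(4eT)$.

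First I would fix an arbitrary set of points $\Set{x_1, \ldots, x_n} \subseteq \InputSpace$ and observe that any $v \in \sign(\fH^{(T)})$ is of the form $v = \sign\Br*{\tfrac{1}{T}\sum_{i=1}^T h_i}$ for some $h_1, \ldots, h_T \in \fH$. The restriction of $v$ to $\Set{x_1, \ldots, x_n}$ is therefore determined solely by the $T$-tuple of restrictions $(h_1\restriction{[n]}, \ldots, h_T\restriction{[n]})$. Since $\fH$ has VC dimension $d$, the Sauer-Shelah lemma bounds the number of distinct restrictions of $\fH$ to this set by $\Br*{\tfrac{en}{d}}^d$, so the number of distinct restrictions of $\sign(\fH^{(T)})$ to the set is at most $\Br*{\tfrac{en}{d}}^{dT}$.

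Next, suppose for contradiction that $\sign(\fH^{(T)})$ shatters a set of $n$ points. Then this set admits $2^n$ distinct labelings by $\sign(\fH^{(T)})$, yielding
\begin{align}
    2^n \le \Br*{\frac{en}{d}}^{dT},
    \qquad \text{i.e.,} \qquad
    n \ln 2 \le dT \ln\Br*{\frac{en}{d}}.
\end{align}
The final step is to solve this implicit inequality to obtain the claimed bound $n \le 4Td\ln(4eT)$. I would substitute $n = 4Td\ln(4eT)$ into the right-hand side: using $\ln(en/d) = \ln(eT) + \ln\Br{4\ln(4eT)}$, the inequality reduces to $(4\ln 2 - 1)\ln(4eT) \ge \ln\Br{4\ln(4eT)}$, which holds for all $T \ge 1$ since the left side grows like $\ln T$ while the right side grows like $\ln\ln T$, and the base case is a routine numerical check at $T = 1$.

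The main step requiring care is the final algebraic inversion, as such implicit inequalities are easy to botch; I would verify the constant $4$ and the argument $4eT$ of the logarithm are tight enough to absorb the $\ln\ln$ correction term. The rest of the argument is entirely standard, with the core insight being that the growth function of $\sign(\fH^{(T)})$ is controlled coordinate-wise by the product of growth functions of $\fH$, since the sign of an average depends only on the tuple of values of the constituent hypotheses.
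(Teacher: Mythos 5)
Your proposal is correct and takes essentially the same approach as the paper: both bound the growth function of $\sign(\fH^{(T)})$ on any $n$-point set by $(en/d)^{dT}$ via Sauer--Shelah and then invert the resulting implicit inequality $n\ln 2 \le dT\ln(en/d)$. One small step to tighten in your final inversion: verifying the inequality fails at the single value $n_0 = 4Td\ln(4eT)$ does not by itself exclude shattering of larger sets; you also need to note that $n\ln 2 - dT\ln(en/d)$ is increasing for $n > dT/\ln 2$ (and $n_0$ exceeds this threshold), so the violation persists for all $n \ge n_0$ --- the paper accomplishes the same thing via the observation that $\ln(x)/x$ is decreasing for $x \ge e$.
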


With the lemmas in place, we now prove \Cref{thm:main}.

\begin{proof}[of \Cref{thm:main}]
    We first analyze the computational complexity of the algorithm, then its sample complexity.

\paragraph{Computational Complexity Analysis:}
    The weak learner is called $R \cdot M \cdot (n/2)^{m_{0}}$ times within the three \KwSty{for} loops (starting at \cref{line:main:for-R} and ending at \ref{line:main:add-to-B}).
    Since $R = O(\ln(n)/\theta^{2})$,
    and
    \begin{align}
        M
        &= O\Br[\bbbig]{\Frac{\ln(R/\delta)}{\ln(1/\delta_{0})}}
        \\&= O\Br[\bbbig]{\Frac{\ln\Br*{\ln(n)/(\delta\theta^{2})}}{\ln(1/\delta_{0})}}
        ,
    \end{align}
    the number of calls to the weak learner is at most
    \begin{align}
        O\Br[\bbbig]{\frac{\ln(n) \cdot n^{m_{0}} \cdot \ln\Br*{\Frac[][\Br]{\ln(n)}{\delta \theta^{2}}}}{\ln(1/\delta_{0})\theta^{2}}}
        .
    \end{align}
    Furthermore, the search over $\fB^{(T)}$ in \cref{line:main:return} is over at most $\abs{\fB}^{T} = (R \cdot (n/2)^{m_{0}} \cdot M)^{T}$ combinations.
    For each of these, the empirical loss is calculated, which takes $O(nT) \cdot \operatorname{Eval}_{\fH}(1)$ time, as $\operatorname{Eval}_{\fH}(1)$ is the time needed to evaluate a single hypothesis from $\fH$ on a single point.
    As $T = O(\min\{d^{*},\, \ln(n)\}/\theta^{2})$, this overall step requires at most time
    \begin{align}
        O(nT) \cdot \operatorname{Eval}_{\fH}(1) \cdot \Br[\bbbig]{\frac{\ln(n) \cdot n^{m_{0}} \cdot \ln\Br*{\Frac[][\Br]{\ln(n)}{\delta\theta^{2}}}}{\ln(1/\delta_{0})\theta^{2}}}^{O(\min\{d^{*},\, \ln(n)\}/\theta^{2})}.
    \end{align}

    When $n = \Omega(\max\Set{dT,\, \ln(1/\delta)})$ (the non-vacuous regime of \cref{thm:main-full}),
    we have that $R \cdot M = O(n^{2}\ln(n)) = o(n^{2.1})$.
    So, the $R \cdot M \cdot (n/2)^{m_{0}}$ calls to the weak learner amount to at most
    \begin{math}
        O(n^{m_{0} + 2.1})
        ,
    \end{math}
    and the computational cost of finding the best combination of hypotheses is at most
    \begin{math}
        \operatorname{Eval}_{\fH}(1) \cdot n^{m_{0} \cdot O(\min\{d^{*},\, \ln(n)\}/\theta^{2})}
        .
    \end{math}

\paragraph{Generalization Error Analysis:}
    To shorten the notation, we let $\Gamma(n) = d'\Ln(10en/d')$, where $d' = \VCdim(\sign(\fH^{(T)}))$.

    We will consider the execution of \cref{alg:main} on input as per the statement of \cref{thm:main-full}, so let $\cA(\rsS) \coloneqq \cA(\rsS, \delta, \Weaklearner, \delta_{0}, m_{0}, \theta, d^*)$ denote the associated output classifier.
    Moreover,
    let $\rsS_{1}$ and $\rsS_{2}$ be the first and second halves of $\rsS \sim \dD^{n}$, respectively,
    and let $\rb = \Set{\rb_{r, m, I} \given r \in [R], m \in [M], I \in [n/2]^{m_{0}}}$ be the randomness associated with the weak learner.

    Let $E_{1}$ be the event over $\rsS_{1}$ and $\rb$ that $\sign(\rfB^{(T)})$ contains a classifier with low population loss.
    Precisely, we let
    \begin{align}
        E_{1}
        = \Set[\bbbig]{\min_{v \in \sign(\rfB^{(T)})} \ls_{\dD}(v) \leq \errst + \frac{61\Br[\big]{\Gamma(n) + \ln(\Frac{5}{\delta})}}{n}}
        .
    \end{align}

    Let $E_{2}$ be the event over $\rsS_{1}, \rsS_{2}$ and $\rb$ that there exists $v_\mathrm{g} \in \argmin_{v \in \sign(\rfB^{(T)})}\ls_{\dD}(v)$ whose empirical loss over $\rsS_{2}$ (validation loss) is not much higher than its population loss.
    Precisely, we let
    \begin{align}
        E_{2}
        = \Set[\bbbig]{\exists v_\mathrm{g} \in \argmin_{v \in \sign(\rfB^{(T)})}\ls_{\dD}(v) : \ls_{\rsS_{2}}(v_\mathrm{g}) \leq \ls_{\dD}(v_\mathrm{g}) + \sqrt{\frac{4\ls_{\dD}(v_\mathrm{g})\ln(\Frac{5}{\delta})}{3n}} + \frac{4\ln(\Frac{5}{\delta})}{n}}
        .
    \end{align}

    Lastly, given $v \in \sign(\fH^{(T)})$, let $E_{3, v}$ be the event over $\rsS_{2}$ that population loss of $v$ is not much higher than its validation loss:
    \begin{align}
        E_{3, v}
        = \Set[\bbbig]{\ls_{\dD}(v) \leq \ls_{\rsS_{2}}(v) + \sqrt{\frac{36\ls_{\rsS_{2}}(v)\Br[\big]{\Gamma(n) + \ln(\Frac{5}{\delta})}}{n}} + \frac{30\Br[\big]{\Gamma(n) + \ln(\Frac{5}{\delta})}}{n}}
        ;
    \end{align}
    and we let $E_{3}$ be the simultaneous event for all classifiers in $\sign(\fH^{(T)})$.
    That is, we let
    \begin{align}
        E_{3}
        = \bigcap_{v \in \sign(\fH^{(T)})} E_{3, v}
        .
    \end{align}

    We first argue that if $E_{1}, E_{2}$, and $E_{3}$ occur simultaneously,
    then
    \begin{align}
        \ls_{\dD}\Br[\big]{\cA(\rsS)}
        &\leq \errst + 10\sqrt{\frac{\errst\Br[\big]{\Gamma(n) + \ln(\Frac{5}{\delta})}}{n}} + 182\frac{\Gamma(n) + 4\ln(\Frac{5}{\delta})}{n}
        ,
    \end{align}
    so the generalization error bound in \cref{thm:main-full} readily follows from \cref{lem:RcombinationVC} and the monotonic decrease of $\Ln(x)/x = \ln(\max\Set{x, e})/x$ for $x > 0$.
    After that, we finish the proof by showing that those three events occur simultaneously with probability at least $1 - \delta$.

    Towards this goal, we start by noting that under $E_{3}$, the fact that $\cA(\rsS)$ belongs to $\sign(\rfB^{(T)}) \subseteq \sign(\fH^{(T)})$ implies that
    \begin{align}
        \ls_{\dD}\Br[\big]{\cA(\rsS)}
        \leq \ls_{\rsS_{2}}\Br[\big]{\cA(\rsS)} + \sqrt{\frac{36\ls_{\rsS_{2}}\Br[\big]{\cA(\rsS)}\Br[\big]{\Gamma(n) + \ln(\Frac{5}{\delta})}}{n}} + \frac{30(\Gamma(n) + \ln(\Frac{5}{\delta}))}{n}
        .
        \label{eq:proof:temp:1}
    \end{align}
    Furthermore, since $\cA(\rsS)$ is a minimizer of $\ls_{\rsS_{2}}$ over $\sign(\rfB^{(T)})$ (cf.\ \cref{line:main:return}), it holds that
    \begin{align}
        \ls_{\rsS_{2}}\Br[\big]{\cA(\rsS)}
        \leq \ls_{\rsS_{2}}(v)
        \enspace\text{for all }
        v \in \sign(\rfB^{(T)})
        .
    \end{align}
    In particular, this holds for $v_\mathrm{g} \in \argmin_{v \in \sign(\rfB^{(T)})}\ls_{\dD}(v)$ as in the definition of $E_{2}$.
    Applying this to \cref{eq:proof:temp:1}, we obtain that
    \begin{align}
        \ls_{\dD}\Br[\big]{\cA(\rsS)}
        &\leq \ls_{\rsS_{2}}(v_\mathrm{g}) + \sqrt{\frac{36\ls_{\rsS_{2}}(v_\mathrm{g})\Br[\big]{\Gamma(n) + \ln(\Frac{5}{\delta})}}{n}} + \frac{30(\Gamma(n) + \ln(\Frac{5}{\delta}))}{n}
        .
        \label{eq:proof:temp:2}
    \end{align}
    For the radicand in this expression, the bound on $\ls_{\rsS_{2}}(v_\mathrm{g})$ under $E_{2}$ implies that $\ls_{\rsS_{2}}(v_\mathrm{g}) \leq 2(\ls_{\dD}(v_\mathrm{g}) + 4\ln(\Frac{5}{\delta})/n)$, so
    \begin{align}
        \frac{36\ls_{\rsS_{2}}(v_\mathrm{g})\Br[\big]{\Gamma(n) + \ln(\Frac{5}{\delta})}}{n}
        &\leq \frac{72\Br*{\ls_{\dD}(v_\mathrm{g}) + 4\ln(\Frac{5}{\delta})/n}\Br[\big]{\Gamma(n) + \ln(\Frac{5}{\delta})}}{n}
        \\&\leq \frac{72\ls_{\dD}(v_\mathrm{g})\Br[\big]{\Gamma(n) + \ln(\Frac{5}{\delta})}}{n} + \frac{72\Br[\big]{\Gamma(n) + 4\ln(\Frac{5}{\delta})}^{2}}{n^{2}}
        .
    \end{align}
    Applying this to \cref{eq:proof:temp:2} and using that $\sqrt{a+b} \leq \sqrt{a} + \sqrt{b}$ for $a, b \geq 0$ yields that
    \begin{align}
        \ls_{\dD}\Br[\big]{\cA(\rsS)}
        &\leq \ls_{\rsS_{2}}(v_\mathrm{g}) + \sqrt{\frac{72\ls_{\dD}(v_\mathrm{g})\Br[\big]{\Gamma(n) + \ln(\Frac{5}{\delta})}}{n}}
            +\frac{(\sqrt{72} + 30)(\Gamma(n) + 4\ln(\Frac{5}{\delta}))}{n}
        \\&\leq \ls_{\dD}(v_\mathrm{g}) + \Br[\bbbig]{\sqrt{\frac{4}{3}} + \sqrt{72}} \sqrt{\frac{\ls_{\dD}(v_\mathrm{g})\Br[\big]{\Gamma(n) + \ln(\Frac{5}{\delta})}}{n}}
            \\&\qquad+ \frac{(\sqrt{72} + 34)(\Gamma(n) + 4\ln(\Frac{5}{\delta}))}{n}
        \\&\leq \ls_{\dD}(v_\mathrm{g}) + 10\sqrt{\frac{\ls_{\dD}(v_\mathrm{g})\Br[\big]{\Gamma(n) + \ln(\Frac{5}{\delta})}}{n}} + 42\frac{\Gamma(n) + 4\ln(\Frac{5}{\delta})}{n}
        ,
    \end{align}
    where the second inequality holds since choosing $v_\mathrm{g}$ as in $E_{2}$ implies that $\ls_{\rsS_{2}}(v_\mathrm{g}) \leq \ls_{\dD}(v_\mathrm{g}) + \sqrt{4\ls_{\dD}(v_\mathrm{g})\ln(\Frac{5}{\delta})/(3n)} + 4\ln(\Frac{5}{\delta})/n$.

    Lastly, under $E_1$, the fact that $v_\mathrm{g}$ belongs to $\argmin_{v \in \sign(\fB^{(T)})}\ls_{\dD}(v)$ implies that $\ls_{\dD}(v_\mathrm{g}) \leq \errst + \Frac{61\Br[\big]{\Gamma(n) + \ln(\Frac{5}{\delta})}}{n}$.
    Applying this and that $\sqrt{a+b} \leq \sqrt{a} + \sqrt{b}$ for $a, b \geq 0$ to the above yields that
    \begin{align}
        \ls_{\dD}\Br[\big]{\cA(\rsS)}
        &\leq \errst + 10\sqrt{\frac{\errst\Br[\big]{\Gamma(n) + \ln(\Frac{5}{\delta})}}{n}}
            + (10 \cdot \sqrt{61} + 61 + 42)\frac{\Gamma(n) + 4\ln(\Frac{5}{\delta})}{n}
        \\&\leq \errst + 10\sqrt{\frac{\errst\Br[\big]{\Gamma(n) + \ln(\Frac{5}{\delta})}}{n}} + 182\frac{(\Gamma(n) + 4\ln(\Frac{5}{\delta}))}{n}
        ,
    \end{align}
    as desired.

    Thus, it only remains to prove that $E_{1}, E_{2}$, and $E_{3}$ happen simultaneously with probability at least $1 - \delta$.
    This follows from a union bound after we show that $E_{1}$ happens with probability at least $1 - 3\delta/5$, and $E_{2}$ and $E_{3}$ each happen with probability at least $1 - \delta/5$.

\paragraph{Showing that $E_{1}$ happens with high probability:}
    Recall that we let%
    \begin{align}
        E_{1}
        = \Set[\bbbig]{\min_{v \in \sign(\rfB^{(T)})} \ls_{\dD}(v) \leq \errst + \frac{61\Br[\big]{\Gamma(n) + \ln(\Frac{5}{\delta})}}{n}}
    \end{align}
    be an event over $\rsS_{1}$ and $\rb$.
    We claim that $E_{1}$ happens with probability at least $1 - 3\delta/5$.

    To this end, we set some definitions.
    Let $\fs$ be a function in $\fF$ such that $\ls_{\dD}(\fs) \leq \errst + 1/n$.
    Let $\rsS_{\fs} = \Set{(\rx, \ry) \in \rsS_{1} \given \ry = \fs(\rx)}$ be the subset of examples in $\rsS_{1}$ labeled according to $\fs$.
    Let $\dD_{\fs} \in \DistribOver(\InputSpace \times \{\pm 1\})$ be the distribution obtained from $\dD$ by conditioning on the label being given by $\fs$.
    That is, for any event $A \subseteq \InputSpace \times \{\pm 1\}$, we have $\dD_{\fs}(A) = \p_{(\rx, \ry) \sim \dD}[A, \ry = \fs(\rx)]/\p_{(\rx, \ry) \sim \dD}[\ry = \fs(\rx)]$.
    This ensures observations in $\rsS_{\fs}$ are distributed according to $\dD_{\fs}$.

    We first notice that for any function $g\colon \InputSpace \to \{\pm 1\}$, we have that
    \begin{align}
        \ls_{\dD}(g)
        &= \Prob_{(\rx, \ry) \sim \dD}*{g(\rx) \neq \ry} \tag{by definition of $\ls_{\dD}$}
        \\&= \Prob_{\rx, \ry}*{g(\rx) \neq \ry \given \fs(\rx) \neq \ry} \Prob_{\rx, \ry}*{\fs(\rx) \neq \ry}
        \\&\qquad+ \Prob_{\rx, \ry}*{g(\rx) \neq \ry \given \fs(\rx) = \ry} \Prob_{\rx, \ry}*{\fs(\rx) = \ry}
            \tag{by law of total probability}
        \\&\leq \ls_{\dD}(\fs) + \Prob_{\rx, \ry}*{g(\rx) \neq \ry \given \fs(\rx) = \ry} \Prob_{\rx, \ry}*{\fs(\rx) = \ry}
            \tag{as $\Prob{\placeholderarg} \leq 1$}
        \\&\leq \errst + \frac{1}{n} + \ls_{\dD_{\fs}}(g) \Prob_{\rx, \ry}*{\fs(\rx) = \ry}
            \tag{by definitions of $\ls_{\dD}$ and $\dD_{\fs}$}
        .
    \end{align}
    Thus, if we can argue that there exists $v \in \sign(\fB^{(T)})$ such that $\ls_{\dD_{\fs}}(v) \p_{(\rx, \ry) \sim \dD}[\fs(\rx) = \ry]$ is at most $60(\Gamma(n) + \ln(\Frac{5}{\delta}))/n$, then we are done.

    Let $p = \Prob_{(\rx, \ry) \sim \dD}*{\fs(\rx) = \ry}$.
    If $p \leq 16\ln(\Frac{5}{\delta})/n$, then the bound in $E_{1}$ holds for any $g$ since $\ls_{\dD_{\fs}}(g) \leq 1$.
    We thus assume from now on that $p \geq 16\ln(\Frac{5}{\delta})/n$.
    Let $\rN = \abs{\rsS_{\fs}}$ and let
    \begin{align}
        E_{1, 1}
        = \Set*{\rN \geq \frac{pn}{4}}
    \end{align}
    be an event over $\rsS_{1}$.

    Furthermore, given $v \in \sign(\fH^{(T)})$, let
    \begin{align}
        E_{1, 2, v}
        = \Set*{\ls_{\dD_{\fs}}(v) \leq \ls_{\rsS_{\fs}}(v) + \sqrt{\frac{18\ls_{\rsS_{\fs}}(v)(\Gamma(2\rN) + \ln(\Frac{5}{\delta}))}{\rN}} + \frac{15(\Gamma(2\rN) + \ln(\Frac{5}{\delta}))}{\rN}}
    \end{align}
    be an event over $\rsS_{1}$;
    and let $E_{1, 2}$ be the simultaneous event for all classifiers in $\sign(\fH^{(T)})$.
    I.e.,
    \begin{align}
        E_{1, 2}
        = \bigcap_{v \in \sign(\fH^{(T)})} E_{1, 2, v}
        .
    \end{align}

    Finally, let
    \begin{align}
        E_{1, 3}
        = \Set*{\exists v \in \sign(\rfB^{(T)}): \ls_{\rsS_{\fs}}(v) = 0}
    \end{align}
    be an event over $\rsS_{1}$ and $\rb$.

    If $E_{1, 1}, E_{1, 2}, E_{1, 3}$ occur simultaneously, then for any $v \in \sign(\rfB^{(T)})$ with $\ls_{\rsS_{\fs}}(v) = 0$ (whose existence follows from $E_{1,3}$), it holds that
    \begin{align}
        &\ls_{\dD_{\fs}}(v) \cdot p
        \\&\quad\leq \Br[\bbbbig]{\ls_{\rsS_{\fs}}(v) + \sqrt{\frac{18\ls_{\rsS_{\fs}}(v)\Br*{\Gamma(2\rN) + \ln(\Frac{5}{\delta})}}{\rN}} + \frac{15(\Gamma(2\rN) + \ln(\Frac{5}{\delta}))}{\rN}}p
            \tag{by $E_{1, 2}$}
        \\&\quad= \frac{15(\Gamma(2\rN) + \ln(\Frac{5}{\delta}))}{\rN}p
            \tag{as $\ls_{\rsS_{\fs}}(v) = 0$}
        \\&\quad\leq \frac{60(d'\Ln(5epn/d') + \ln(\Frac{5}{\delta}))}{n}
            \label{eq:proof:E1:temp:1}
        \\&\quad\leq \frac{60(d'\Ln(10en/d') + \ln(\Frac{5}{\delta}))}{n}
            \tag{as $p \leq 1$ and $\Ln$ is non-decreasing}
        \\&\quad= \frac{60(\Gamma(n) + \ln(\Frac{5}{\delta}))}{n}
            \tag{as $\Gamma(n) = d'\Ln(10en/d')$}
        ,
    \end{align}
    where \cref{eq:proof:E1:temp:1} follows from $E_{1, 1}$, the fact that $\Gamma(2\rN) = d'\Ln(20e\rN/d')$, and the decrease of $\Frac{\Ln(x)}{x}$ for $x > 0$.
    As noted above, this suffices to establish $E_{1}$.

    Thus, if we can show that each event $E_{1, 1}$, $E_{1, 2}$, and $E_{1, 3}$ happens with probability at least $1 - \delta/5$ over $\rsS_{1}, \rb$, it then follows from a union bound that they happen simultaneously with probability at least $1 - 3\delta/5$, as desired.

    For $E_{1, 1}$, since $\Ev{\abs{\rN}} = pn/2$, it follows by a multiplicative Chernoff bound that
    \begin{align}
        \Prob_{\rsS_{1} \sim \dD^{n/2}}*{\rN \geq \frac{pn}{4}}
        &\geq 1 - \exp(-pn/16)
        \\&\geq 1 - \delta/5 \tag{as we consider the case $p \geq 16\ln(\Frac{5}{\delta})/n$}
        .
    \end{align}

    For $E_{1, 2}$, by the law of total probability,
    \begin{align}
        \Prob_{\rsS_{1} \sim \dD^{n/2}}{E_{1, 2}}
        &\geq \sum_{N = 1}^{n/2} \Prob_{\rsS_{1} \sim \dD^{n/2}}{\rN = N} \Prob_{\rsS_{1} \sim \dD^{n/2}}*{E_{1, 2} \given \rN = N}
        \\&= \sum_{N = 1}^{n/2} \Prob_{\rsS_{1} \sim \dD^{n/2}}{\rN = N} \Prob_{\rsS_{\fs} \sim \dD_{\fs}^{N}}*{E_{1, 2} \given \rN = N} \tag{by the def.\ of $\rN$ and $\dD_{\fs}$}
        \\&\geq \sum_{N = 1}^{n/2} \Prob_{\rsS_{1} \sim \dD^{n/2}}{\rN = N} \Br*{1 - \Frac*{\delta}{5}} \tag{by \cref{lem:maurerandpontil}}
        \\&= 1 - \Frac{\delta}{5}
        .
    \end{align}

    The result for $E_{1, 3}$ is less immediate.
    Let $S_{1}$ be a realization of $\rsS_{1}$,
    and let $\cA_\mathrm{ada}$ denote \methodname{AdaBoost} with equal weighting of the hypotheses (\cref{alg:adaboost}).
    We will show that with probability at least $1 - \delta/5$ over $\rb$, we can, for $R$ rounds, simulate running $\cA_\mathrm{ada}$ on the training sequence $\rsS_{\fs}$, using hypotheses from $\rfB = \cup_{r = 1}^{R} \rfB_{r}$.
    This implies, by Lemmas~\ref{lem:adaboostconvergence} and \ref{lem:dualVC-dimensionofcombination}, that $\sign(\rfB^{(T)})$ contains a hypothesis with zero loss on $\rsS_{\fs}$, as desired.

    To show this, note that as $\fs \in \fF$, for any $\dD' \in \DistribOver(\rsS_{\fs})$ it holds that $\sup_{f \in \fF} \cor_{\dD'}(f) = 1$.
    So, from the weak learner guarantee,
    \begin{align}
        1 - \delta_{0}
        &\leq \Prob_{\rsS' \sim \dD'^{m_{0}}, \rb' \sim \dU}{\cor_{\dD'}(\cW(\rsS', \rb')) \geq \gamma_{0} \sup_{f \in \fF} \cor_{\dD'}(f) - \eps_{0}}
        \\&= \Prob_{\rsS', \rb'}{\cor_{\dD'}(\cW(\rsS', \rb')) \geq \gamma_{0} - \eps_{0}}
        \\&= \Prob_{\rsS', \rb'}{\ls_{\dD'}(\cW(\rsS', \rb')) \leq 1/2 - (\gamma_{0} -\eps_{0})/2}
        ,
    \end{align}
    where the last equality holds since $\ls_{\dD'}(f) = (1 - \cor_{\dD'}(f))/2$ for any $f\colon \InputSpace \to \{\pm 1\}$.
    Thus, in the case where $\dD'$ is realizable by $\fs$ we recover the realizable boosting guarantee with advantage $\theta \coloneqq (\gamma_{0}-\eps_{0})/2$.
    Moreover, by independence of $\rsS'$ and $\rb'$,
    \begin{align}
        \Prob_{\rsS' \sim \dD'^{m_{0}}, \rb' \sim \dU}{\ls_{\dD'}(\cW(\rsS', \rb')) \leq 1/2 - \theta}
        &= \sum_{\sS' \in \rsS_{\fs}^{m_{0}}} \Prob_{\rb'}{\ls_{\dD'}(\cW(\sS', \rb')) \leq 1/2 - \theta} \Prob_{\rsS'}{\rsS' = \sS'}
        .
    \end{align}
    Applying this to the previous inequality, we obtain that for any $\dD'$ over $\rsS_{\fs}$ there exists a deterministic training sequence $\sS'$ in the set of all possible training sequence of size $m_{0}$ over $\rsS_{\fs}$, such that
    \begin{align}
        \Prob_{\rb' \sim \dU}{\ls_{\dD'}(\cW(\sS', \rb')) \leq 1/2 - \theta}
        \geq 1 - \delta_{0}
        .
    \end{align}

    We will use this observation to simulate the execution of $\cA_\mathrm{ada}$ on $\rsS_{\fs}$ using hypotheses from $\rfB$.
    We do so by, at each round $r \in [R]$, at \cref{line:adaboost:get-weak-hypothesis}, letting $\cA_\mathrm{ada}$ receive $\rh \in \argmin_{h \in \rfB_{r}}\ls_{\rD_{r}}(h)$, where $\rD_{r}\in \DistribOver(\rsS_{\fs})$ is the distribution obtained at the end of round $r-1$.
    For $i \in [R]$, let
    \begin{gather}
        E'_{i} = \Set*{\min_{\rh \in \rfB_{i}}\ls_{\rD_{i}}(\rh) \leq 1/2 - \theta}
        \shortintertext{and}
        E' = \bigcap_{i = 1}^{R}E_{i}'
    \end{gather}
    be events over $\rsS_{1}$ and $\rb$.
    That is, $E'$ ensures that $\rfB$ is sufficient to simulate $\cA_\mathrm{ada}$ using weak hypotheses with advantage $\theta$.
    By \cref{lem:adaboostconvergence} with $\theta$ (which is valid since $\theta = (\gamma_{0} - \eps_{0})/2 < 1/2$) and $R = \Ceil{\ln(n)/\theta^{2}} \geq \Ceil{\ln(2\abs{\rsS_{\fs}})/\theta^{2}}$, the resulting output $\rv_\mathrm{ada}$ of $\cA_\mathrm{ada}$ belongs to $\rfB^{(R)}$ and satisfies that $\ls_{\rsS_{\fs}}^{\theta/2}(v_\mathrm{ada}) = 0$.
    Thus, by \cref{lem:dualVC-dimensionofcombination}, there exists $\rv \in \rfB^{(T)}$ with $T = \Ceil{\min\{\ln(n)/\theta^{2}, 260^2(4d^{*} + 2)/\theta^{2}\}}$ such that $\ls_{\rsS_{\fs}}(\rv) = 0$, which implies $E_{1, 3}$.
    Hence, to show $E_{1, 3}$ happens with probability at least $1 - \delta/5$, it suffices to show that $E'$ happens with this probability.

    We have that
    \begin{align}
        \Prob_{\rb}{E'}
        = \prod_{i = 1}^{R}\Prob_{\rb}[\bbbig]{E'_{i} \given \bigcap_{j = 1}^{i-1} E'_{j}}
        .
    \end{align}
    We now show that for each $i \in [R]$,
    \begin{align}
        \Prob_{\rb}[\bbbig]{E'_{i} \given \bigcap_{j = 1}^{i-1} E'_{j}}
        \geq 1 - \delta/(5R)
        ,
        \label{eq:conditional}
    \end{align}
    whereby we get that $\Prob_{\rb}{E'} \geq (1 - \delta/(5R))^{R}$, which, by Bernoulli's inequality, is at least $1 - \delta/5$.

    Fix $i \in [R]$.
    To show \cref{eq:conditional}, we notice that $\bigcap_{j = 1}^{i-1} E_{j}'$ depends only on the random distributions $\rD_{1}, \ldots, \rD_{i-1}$, which are a function of the random variables $\rb_{r, I, m}$ for $r \in [i-1], I \in [n/2]^{m_{0}}, m \in [M]$.
    Furthermore, these are independent of the $\rb_{i, I, m}$'s for any $I \in [n/2]^{m_{0}}$ and $m \in [M]$ used to construct $\rfB_{i}$.
    Moreover, by \cref{line:adaboost:newD}, $\rD_{i}$ also depends only on $\rb_{r, I, m}$ for $r \in [i-1], I \in [n/2]^{m_{0}}, m \in [M]$.
    Thus, considering realizations $b_{r, I, m}$ for $r \in [i-1], I \in [n/2]^m_{0}, m \in [M]$,
    under $\bigcap_{j = 1}^{i-1} E_{j}'$ we have that $D_{i}$ is fixed (and supported on $\rsS_{\fs}$ as the initial distribution of \cref{alg:adaboost} is uniform over $\rsS_{\fs}$ and the update rule in Lines~\ref{line:adaboost:Z} to \ref{line:adaboost:newD} preserves this), thus by the above argument there exists a training sequence $\sS' \in \rsS_{\fs}^{m_{0}}$ such that
    \begin{align}
        \Prob_{\rb}*{\ls_{\dD'}(\cW(S', \rb)) \leq 1/2 - \theta}
        &\geq 1 - \delta_{0}
        ;
    \end{align}
    and for some $I' \in [n/2]^{m_{0}}$ it holds that $\rsS_{1}\restriction{I'} = S'$.
    Therefore, as $\{S', \rb_{i, I', m}\}_{m \in [M]}$ are independent, with probability at least $1 - \delta_{0}^{M}$ it holds that $\Set{\cW(S', \rb_{i, I', m})}_{m \in [M]}$ contains a hypothesis $h$ such that $\ls_{\rD_{i}}(h) \leq 1/2 - \theta$, implying that $\Prob_{\rb}{E'_{i} \given \bigcap_{j = 1}^{i-1} E_{j}'} \geq 1 - \delta_{0}^{M}$, which is at least $1 - \delta/(5R)$ since $M = \Ceil{\ln(5R/\delta)/\ln(1/\delta_{0})}$, concluding the proof of the claim that $E_{1}$ occurs with probability at least $1 - 3\delta/5$.

\paragraph{Showing that $E_{2}$ happens with high probability:}
    Recall that we let
    \begin{align}
        E_{2}
        = \Set*{\exists v_\mathrm{g} \in \argmin_{v \in \sign(\rfB^{(T)})}\ls_{\dD}(v) : \ls_{\rsS_{2}}(v_\mathrm{g}) \leq \ls_{\dD}(v_\mathrm{g}) + \sqrt{\frac{4\ls_{\dD}(v_\mathrm{g})\ln(\Frac{5}{\delta})}{3n}} + \frac{4\ln(\Frac{5}{\delta})}{n}}
    \end{align}
    be an event over $\rsS_{1}$, $\rsS_{2}$, and $\rb$.

    For any realizations $b$ of $\rb$ and $S_{1}$ of $\rsS_{1}$, the set of hypotheses $\fB$ is fixed.
    Thus, for any fixed $v_\mathrm{g} \in \argmin_{v \in \sign(\fB^{(T)})} \ls_{\dD}(v)$, it follows by $\rsS_{2} \sim \dD^{n/2}$ and \cref{lem:shaisbernstein} that with probability at least $1 - \delta/5$ over $\rsS_{2}$ it holds that
    \begin{align}
        \ls_{\rsS_{2}}(v_\mathrm{g})
        \leq \ls_{\dD}(v_\mathrm{g}) + \sqrt{\frac{4\ls_{\dD}(v_\mathrm{g})\ln(\Frac{5}{\delta})}{3n}} + \frac{4\ln(\Frac{5}{\delta})}{n}
        .
    \end{align}
    So, by independence of $\rsS_{1}$, $\rb$, and $\rsS_{2}$,
    \begin{align}
        \Prob_{\rsS_{1}, \rsS_{2}, \rb}{E_{2}}
        &= \Ev_{\rsS_{1}, \rb}*{\Prob_{\rsS_{2}}*{E_{2}}}
        \\&\geq 1 - \delta/5
        ,
    \end{align}
    as claimed.

\paragraph{Showing that $E_{3}$ happens with high probability:}
    Recall that we let
    \begin{align}
        E_{3}
        = \bigcap_{v \in \sign(\fH^{(T)})} E_{3, v}
    \end{align}
    with
    \begin{align}
        E_{3, v}
        = \Set*{\ls_{\dD}(v) \leq \ls_{\rsS_{2}}(v) + \sqrt{\frac{36\ls_{\rsS_{2}}(v)\Br[\big]{\Gamma(n) + \ln(\Frac{5}{\delta})}}{n}} + \frac{30\Br[\big]{\Gamma(n) + \ln(\Frac{5}{\delta})}}{n}}
    \end{align}
    be an event over $\rsS_{2}$.

    So, $E_{3}$ is a uniform convergence bound over the class $\sign(\fH^{(T)})$ on the sample $\rsS_2$.
    It follows directly from \cref{lem:maurerandpontil} that this event happens with probability at least $1 - \delta/5$.
\end{proof}

\section{Auxiliary results}\label{app:lowerbound}
For convenience, we provide the full statement of the lower bound of \citet{Us25} on the sample complexity of agnostic boosting.

\begin{theorem}[\protect{\citet[Theorem 1.2]{Us25}}]\label{thm:lowerbound}
  There exist universal positive constants $C_1$, $C_2$, $C_3$, and $C_{4}$ for which the following holds.
  Given any $L \in (0, 1/2)$,
  $\gamma_{0}, \eps_{0}, \delta_{0} \in (0, 1]$,
  and integer $d \ge C_1 \ln(1/\gamma_{0}^{2})$,
  for $m_{0} = \Ceil[\big]{C_2 d \ln\Br{\Frac[][\Br]{1}{\delta_{0}\gamma_{0}^{2}}} / \eps_{0}^{2}}$
  there exist
  domain $\InputSpace$,
  and classes $\fF, \fH \subseteq \Set{\pm 1}^{\InputSpace}$, with $\VCdim(\fH) \le d$,
  such that
  there exists a $(\gamma_{0}, \eps_{0}, \delta_{0}, m_{0})$ agnostic weak learner for $\fF$ with base class $\fH$
  and, yet, the following also holds.
  For any learning algorithm $\cA\colon (\InputSpace \times \Set{\pm 1})^* \to \Set{\pm 1}^\InputSpace$
  there exists a distribution $\dD$ over $\InputSpace \times \Set{\pm 1}$
  such that $\errst = \inf_{f \in \fF} \err_{\dD}(f) = L$
  and for sample size $m \geq C_{3}\frac{d}{\gamma_{0}^{2} L}\frac{1}{(1 - 2L)^{2}}$
  we have with probability at least $1/50$ over $\rsS \sim \dD^{m}$ that
  \begin{align}
    \err_{\dD}(\cA(\rsS))
    \ge \errst + \sqrt{C_4 \errst \cdot \frac{d}{\gamma_{0}^{2} m \ln(2/\gamma_{0})}}
    .
  \end{align}
  Furthermore,
  for all $\eps \in (0, \sqrt{2}/2]$,
  $\delta \in (0, 1)$,
  and any learning algorithm $\cA$,
  there exists a data distribution $\dD$
  such that if $m \le \ln(1/(4\delta))/(2\eps^{2})$,
  then with probability at least $\delta$ over $\rsS \sim \dD^{m}$
  we have that
  \begin{align}
    \err_{\dD}(\cA(\rsS))
    \ge \errst + \eps
    ,
  \end{align}
  and for any $\eps \in (0, \sqrt{2}/16]$,
  and any learning algorithm $\cA\colon (\InputSpace \times \Set{\pm 1})^{*} \to \Set{\pm 1}^\InputSpace$
  there exists a data distribution $\dD$ such that
  if $m < \frac{d}{2048\gamma_{0}^{2}\log_{2}(2/\gamma_{0}^{2})\eps^{2}}$,
  then with probability at least $1/8$ over $\rsS \sim \dD^{m}$ we have that
  \begin{align}
    \err_{\dD}(\cA(\rsS))
    \ge \errst + \eps
    .
  \end{align}
\end{theorem}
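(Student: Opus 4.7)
The plan is to construct, for each parameter regime appearing in the statement, a hard family of distributions on $\InputSpace \times \Set{\pm 1}$ that simultaneously (i) admits a $(\gamma_{0}, \eps_{0}, \delta_{0}, m_{0})$ agnostic weak learner for $\fF$ with base class $\fH$, and (ii) forces any learning algorithm to fail with the stated probability. The delicate point is that the weak-learner guarantee must hold for every distribution, so the classes $\fF$ and $\fH$ must be rich enough to make weak learning feasible but structured enough that strong learning does not follow for free.

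First I would fix $\InputSpace$ of size $\Theta(d)$ (say $\InputSpace = [d]$) and let $\fH$ be a class of VC dimension exactly $d$, for example all $\Set{\pm 1}$-valued functions on $\InputSpace$, while $\fF$ is a carefully chosen subclass whose hypotheses encode hidden parameters $\sigma \in \Set{\pm 1}^{d}$. The hard family $\Set{\dD_{\sigma}}_{\sigma \in \Set{\pm 1}^{d}}$ would use a near-uniform marginal over $\InputSpace$ and a label conditional biased by $\Theta(\gamma_{0})$ in the direction of $\sigma$. The parameters would be tuned so that $\errst = L$ for the first bound, and scaled down for the second and third bounds so that the stated $\eps$-gap emerges. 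Verifying weak learnability then reduces to showing that empirical correlation maximization over $\fH$ on $m_{0} = \Theta(d\ln(1/(\delta_{0}\gamma_{0}^{2}))/\eps_{0}^{2})$ samples attains correlation at least $\gamma_{0} \sup_{f \in \fF}\cor_{\dD}(f) - \eps_{0}$ by a uniform convergence argument over $\fH$, with the logarithmic factor $\ln(1/(\delta_{0}\gamma_{0}^{2}))$ in $m_{0}$ arising from the union-bound inside the weak-learner-existence proof.

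For the main statistical lower bound, I would apply a coordinate-wise Assouad argument (or Fano's inequality) to the family $\Set{\dD_{\sigma}}$. A sharp Bernoulli KL estimate (rather than the lossy Pinsker bound) gives $\KL{\dD_{\sigma}}{\dD_{\sigma'}}$ of order $\gamma_{0}^{2}\ln(2/\gamma_{0})/d$ per sample for single-coordinate differences, so with $m$ samples any algorithm identifies only an $O(m\gamma_{0}^{2}\ln(2/\gamma_{0})/d)$ fraction of the coordinates of $\sigma$; multiplying the misidentified fraction by the $\Theta(\errst/d)$ error contribution of each coordinate yields the stated $\sqrt{\errst d/(\gamma_{0}^{2}m\ln(2/\gamma_{0}))}$ rate at constant probability. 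The confidence-dependent bound $m \leq \ln(1/(4\delta))/(2\eps^{2})$ is obtained from a classical two-point Le Cam argument comparing distributions of total variation $\Theta(\eps)$. The VC-type bound $m < d/(2048\gamma_{0}^{2}\log_{2}(2/\gamma_{0}^{2})\eps^{2})$ follows from the same Assouad argument with constant failure probability $1/8$ and $\eps$ treated as a free parameter.

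The main obstacle is the tension between the two requirements: information-theoretic hardness tends to favor high-noise distributions on which no weak learner exists, while making the weak learner succeed on \emph{every} distribution (including distributions outside the hard family) tends to make the problem easy to solve outright. Resolving this requires designing $\fF$ so that the weak-learner-relevant structure (a single dominant correlation) is always present, even while the hidden parameter $\sigma$ remains information-theoretically underdetermined. Getting the tight logarithmic factors $\ln(1/\gamma_{0})$ and $\ln(1/(\delta_{0}\gamma_{0}^{2}))$ will require sharp Bernoulli KL computations on the lower-bound side and a distribution-agnostic empirical concentration argument on the weak-learner-existence side, and this bookkeeping will be the technically heaviest part of the proof.
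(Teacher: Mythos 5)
You should first note a point of scope: this paper does not prove \cref{thm:lowerbound} at all. It is quoted verbatim from \citet[Theorem 1.2]{Us25} and used as a black box, so your proposal can only be judged against the construction in that earlier work, and on its own internal soundness.

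On its own terms, your construction has a genuine flaw in the dimension counting. You take $\InputSpace$ of size $\Theta(d)$ and $\fF \subseteq \Set{\pm 1}^{\InputSpace}$, which forces $\VCdim(\fF) \le d$. But then plain ERM over $\fF$ (ignoring the weak learner entirely) has excess error $O\Br[\big]{\sqrt{\errst\, d/m} + d/m}$; at the sample size $m \approx C_3 d/(\gamma_0^2 L (1-2L)^2)$ prescribed by the theorem this is $O\Br[\big]{\gamma_0 L(1-2L)}$, whereas the claimed lower bound at that $m$ is of order $L(1-2L)/\sqrt{\ln(2/\gamma_0)}$, which is larger by a factor of roughly $1/(\gamma_0\sqrt{\ln(2/\gamma_0)})$. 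So for small $\gamma_0$ no distribution over your classes can witness the stated bound: the $1/\gamma_0^2$ inflation in the lower bound requires $\fF$ to have VC dimension on the order of $d/(\gamma_0^2 \ln(2/\gamma_0))$ (hence a domain at least that large), while $\fH$ must be kept at VC dimension at most $d$. The entire difficulty of the theorem is constructing such an $\fH$ whose members still $\gamma_0$-correlate with every $f \in \fF$ under \emph{every} distribution (e.g., via a block or dilution structure in which each base hypothesis matches a reference hypothesis on only a $\gamma_0$-calibrated portion of the domain), so that a weak learner exists even though $\fF$ is information-theoretically much richer than $\fH$. Your final paragraph correctly names this tension, but the concrete classes you propose resolve it in the wrong direction — weak learning is trivial, yet strong learning is also easy — so the Assouad/Le Cam machinery in the middle of your sketch has no hard family of the right effective dimension to act on. The two smaller claims (the $\ln(1/(4\delta))/(2\eps^2)$ two-point bound and the $d/(\gamma_0^2\log_2(2/\gamma_0^2)\eps^2)$ bound) would follow from standard arguments once the correct classes are in place, but they inherit the same defect as stated.
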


\section{Deferred proofs}\label{app:auxiliary}
In this section, we provide the proofs of the lemmas used to prove \cref{thm:main}.

\begin{proof}[of \cref{lem:adaboostconvergence}]
    Let $\sS = \Br[\big]{(x_{1}, y_{1}), \ldots, (x_{n}, y_{n})}$ be the input data to \cref{alg:adaboost}.

    First, we note that
    \begin{align}
        \ls_{S}^{\theta/2}\Br*{\frac{1}{R}\sum_{r = 1}^{R} h_{r}}
        &= \Prob_{(\rx, \ry) \sim D_{1}}*{\frac{1}{R}\sum_{r = 1}^{R} h_{r}(\rx)\ry \leq \theta/2}
        \\&= \Prob_{(\rx, \ry) \sim D_{1}}*{\sum_{r = 1}^{R} \alpha \ry h_{r}(\rx) \leq \theta R \alpha/2}
        \\&\le \Ev_{(\rx, \ry) \sim D_{1}}*{\exp\Br*{\theta R \alpha/2 - \sum_{r = 1}^{R} \alpha \ry h_{r}(\rx)}}
        \\&= \exp\Br*{\theta R \alpha/2} \Ev_{(\rx, \ry) \sim D_{1}}*{\exp\Br*{-\sum_{r = 1}^{R} \alpha \ry h_{r}(\rx)}}
        ,
    \end{align}
    where the first equality uses that $D_{1}$ is the uniform distribution over $S$ (by \cref{line:adaboost:initD}), and the inequality follows from Markov's inequality, i.e., $\indicator{a \leq b} = \indicator{0 \leq b-a} \leq \exp(b-a)$.
    The expectation term can be rewritten as:
    \begin{align}
        \Ev_{(\rx, \ry) \sim D_{1}}*{\exp\Br*{-\sum_{r = 1}^{R} \alpha h_{r}(\rx)\ry}}
        &= \sum_{i = 1}^{n} D_{1, i} \exp\Br*{-\sum_{r = 1}^{R} \alpha h_{r}(x_{i})y_{i}}
        \\&= Z_{1}\sum_{i = 1}^{n} D_{2, i} \exp\Br*{-\sum_{r = 2}^{R} \alpha h_{r}(x_{i})y_{i}}
        \\&\mathbin{\phantom{=}} \enspace\vdots
        \\&= \prod_{r = 1}^{R} Z_{r}
        ,
    \end{align}
    where the first equality is by the definition of expectation, the second equality is by the definition of $D_{r + 1, i} = D_{r, i}\exp(-\alpha y_{i} h_{r}(x_{i}))/Z_{r}$ in \cref{line:adaboost:newD} and the last equality is by iterating the argument and $D_{R + 1}$ being a probability distribution.
    Thus, we have that
    \begin{align}
        \ls_{S}^{\theta/2}\Br*{\frac{1}{R}\sum_{r = 1}^{R} h_{r}}
        &\leq \exp\Br*{\theta R \alpha/2} \prod_{r = 1}^{R} Z_{r}
        .
    \end{align}

    We will show that $\prod_{r = 1}^{R} Z_{r} \leq \exp(R\ln(2\sqrt{(1/2 - \theta)(1/2 + \theta)}))$, which combined with $\alpha = \frac{1}{2} \ln((1/2 + \theta)/(1/2 - \theta)) = \ln(\sqrt{(1/2 + \theta)/(1/2 - \theta)})$ implies that
    \begin{align}
        \ls_{S}^{\theta/2}\Br*{\frac{1}{R}\sum_{r = 1}^{R} h_{r}}
        &\leq \exp\Br*{R \ln\Br*{\Br*{\frac{1/2 + \theta}{1/2 - \theta}}^{\theta/4}} + R\ln\Br*{2\sqrt{(1/2 - \theta)(1/2 + \theta)}}}
        \\&= \exp\Br*{R \ln\Br[\big]{2(1/2 + \theta)^{1/2 + \theta/4}(1/2 - \theta)^{1/2 - \theta/4}}}
        .
    \end{align}
    We will furthermore show that $\ln(2(1/2 + \theta)^{1/2 + \theta/4}(1/2 - \theta)^{1/2 - \theta/4}) \leq \ln(1 - \theta^{2}) \leq -\theta^{2}$ (where the last inequality is by $\ln(1 + x) \leq x$ for $x > -1$), which by the above and $R = \lceil\ln(2n)/\theta^{2}\rceil$ we get that $\ls_{S}^{\theta/2}(\frac{1}{R}\sum_{r = 1}^{R} h_{r}) < \frac{1}{2n}$, which yields $\ls_{S}^{\theta/2}(\frac{1}{R}\sum_{r = 1}^{R} h_{r}) = 0$ and would conclude the proof.
    We now show the two remaining claims.
    To this end, we note that
    \begin{align}
        Z_{r}
        &= \sum_{j = 1}^{n} D_{r, j} \exp\Br*{-\alpha y_{j}h_{r}(x_{j})}
        \\&= \sum_{h_{r}(x_{j}) = y_{j}} D_{r, j} \exp\Br*{-\alpha} + \sum_{h_{r}(x_{j})\neq y_{j}} D_{r, j} \exp\Br*{\alpha}
        \\&= \exp(-\alpha) + (\exp(\alpha)-\exp(-\alpha))\sum_{h_{r}(x_{j})\neq y_{j}} D_{r, j}
        \\&\le \exp(-\alpha) + (\exp(\alpha)-\exp(-\alpha))(1/2 - \theta)
        ,
    \end{align}
    where in the inequality we have used that $\alpha > 0$, so $\exp(\alpha) - \exp(-\alpha) \ge 0$, and that the error of $h_{r}$ under $D_{r}$ is at most $1/2 - \theta$.
    Furthermore, since $\alpha = \ln(\sqrt{(1/2 + \theta)/(1/2 - \theta)})$, we have the following calculation:
    \begin{align}
        &\exp(-\alpha) + (\exp(\alpha)-\exp(-\alpha))(1/2 - \theta)
        \\&\qquad= \sqrt{\frac{1/2 - \theta}{1/2 + \theta}} + \Br*{\sqrt{\frac{1/2 + \theta}{1/2 - \theta}}-\sqrt{\frac{1/2 - \theta}{1/2 + \theta}}} \Br*{1/2 - \theta}
        \\&\qquad= \sqrt{(1/2 - \theta)(1/2 + \theta)}\Br*{\frac{1}{1/2 + \theta} + \Br*{\sqrt{\frac{1/2 + \theta}{1/2 - \theta}}-\sqrt{\frac{1/2 - \theta}{1/2 + \theta}}} \sqrt{\frac{1/2 - \theta}{1/2 + \theta}}}
        \\&\qquad= \sqrt{(1/2 - \theta)(1/2 + \theta)}\Br*{\frac{1}{1/2 + \theta} + \Br*{1 - \frac{1/2 - \theta}{1/2 + \theta}}}
        \\&\qquad= \sqrt{(1/2 - \theta)(1/2 + \theta)}\Br*{\frac{1}{1/2 + \theta} + \frac{2\theta}{1/2 + \theta}}
        \\&\qquad= 2\sqrt{(1/2 - \theta)(1/2 + \theta)}
        ,
    \end{align}
    which implies that $Z_{r} \le \exp(\ln(2\sqrt{(1/2 - \theta)(1/2 + \theta)}))$,
    and that
    \begin{align}
        \prod_{r = 1}^{R} Z_{r}
        \le \exp\Br*{R\ln\Br*{2\sqrt{(1/2 - \theta)(1/2 + \theta)}}}
        ,
    \end{align}
    as claimed.
    For the claim $\ln(2(1/2 + \theta)^{1/2 + \theta/4}(1/2 - \theta)^{1/2 - \theta/4}) \leq \ln(1 - \theta^{2})$ we show that for $x \in [0, 1/2)$ it holds that $f(x) = \ln(2(1/2 + x)^{1/2 + x/4}(1/2 - x)^{1/2 - x/4}) - \ln(1 - x^{2})$ is non-positive.
    To this end, we note that the first derivative $f'(x)$ is
    \begin{align}
        &\frac{\ln\Br*{\frac{1}{2} + x} - \ln\Br*{\frac{1}{2} - x} - 5x^2\ln\Br*{\frac{1}{2} + x} + 5x^{2}\ln\Br*{\frac{1}{2} - x}}{16x^{4} - 20x^{2} + 4}
        \\&\qquad+ \frac{4x^{4}\ln\Br*{\frac{1}{2} + x} - 4x^{ 4}\ln\Br*{\frac{1}{2} - x} - 20x^{3} - 4x}{16x^{4} - 20x^{2} + 4}
        ,
    \end{align}
    and the second derivative is
    \begin{align}
        f''(x)
        = \frac{16x^{6} + 46x^{4}-26x^{2}}{16x^{8}-40x^{6} + 33x^{4}-10x^{2} + 1}
        .
    \end{align}
    We notice that for $x \in [0, 1/2)$ we have that $f''(x) \leq 0$, implying that $f'(x)$ is non-increasing.
    Since $f'(0) = 0$, we get that $f'(x) \leq 0$ for all $x \in [0, 1/2)$, which implies that $f(x)$ is non-increasing on $[0, 1/2)$ and thus $f(x) \leq f(0) = 0$ for all $x \in [0, 1/2)$, as claimed.
\end{proof}

\begin{proof}[of \cref{lem:RcombinationVC}]
    Let $T$ be any natural number larger than $1$.
    Let $\{x_{1}, \ldots, x_{d'}\}$ be any point set consisting of $d' \geq d$ points, where $d = \VCdim(\fH)$.
    By the Sauer-Shelah lemma, the class $\fH$ can produce at most $(ed'/d)^{d}$ different labelings of $\{x_{1}, \ldots, x_{d'}\}$.
    This implies that $\fH^{(T)}$ can produce at most $(ed'/d)^{dT}$ different outputs on $\{x_{1}, \ldots, x_{d'}\}$.
    This, in turn, means that $\sign(\fH^{(T)})$ can produce at most $(ed'/d)^{dT}$ different labelings of $\{x_{1}, \ldots, x_{d'}\}$ (we recall that we set $\sign(0) = 1$).
    Now, let $\{x_{1}, \ldots, x_{d'}\}$ be a point set of maximal size shattered by $\sign(\fH^{(T)})$.
    If $d' \leq d$, we are done, so assume $d' > d$.
    By the above observations, $2^{d'}$, the number of different labelings of $\{x_{1}, \ldots, x_{d'}\}$, must be at most $(ed'/d)^{dT}$.
    Taking the natural logarithm of both sides,
    \begin{align}
        d'\ln(2) \leq dT\ln(ed'/d)
        \iff \frac{\ln(2)}{eT} \leq \frac{d\ln(ed'/d)}{ed'}
        .
        \label{eq:rcombinationvc1}
    \end{align}
    We will now argue that this can only hold if $d' \leq 4Td\ln(4eT)$, which will conclude the proof.
    To this end we remark that $\ln(x)/x$ is decreasing for $x \geq e$ (has derivative $(1 - \ln(x))/x^{2}$).
    Using $x = ed'/d > e$ by the assumption $d' > d$, we get that for $x > 4eT\ln(4eT)$,
    \begin{align}
        \frac{\ln(x)}{x}
        &\leq \frac{\ln\Br*{4eT\ln(4eT)}}{4eT\ln(4eT)}
        \\&< \frac{\ln\Br*{(4eT)^{2}}}{4eT\ln(4eT)}
        \\&= \frac{1}{2eT}
        \\&< \frac{\ln(2)}{eT}
        ,
    \end{align}
    where we have used in the first inequality that $\ln(x)/x$ is decreasing for $x \geq e$ and in the second inequality that $\ln(4eT) < 4eT$ and that $\ln$ is strictly increasing.
    Thus, we conclude from this and \cref{eq:rcombinationvc1} that it cannot be the case that $ed'/d > 4eT\ln(4eT)$ implying that $d' \leq 4Td\ln(4eT)$ as claimed.
\end{proof}

In the proof of \cref{lem:dualVC-dimensionofcombination} we will use the following uniform convergence result, which for instance can be found in \citet[Theorem~6.8]{UML}.

\begin{lemma}\label{lem:uniformconvergence}
    For any $0 < \delta < 1$ and any hypothesis class $\fH$ of VC dimension $d$ and any distribution $\dD$ over $\InputSpace \times \{\pm 1\}$, it holds that with probability at least $1 - \delta$ over the choice of a training sequence $\rsS \sim \dD^{n}$ it holds that for all $h \in \fH$
    \begin{align}
        |\ls_{\dD}(h)-\ls_{\rsS}(h)|
        &\leq 62\sqrt{\frac{2d + 1}{n}} + \sqrt{\frac{2\ln(1/\delta)}{n}}
        .
    \end{align}
\end{lemma}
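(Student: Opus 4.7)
The plan is to prove this as a standard Vapnik–Chervonenkis uniform convergence bound, following the symmetrization + Rademacher + concentration recipe. First I would reduce the high-probability statement to a bound on the expected supremum by applying McDiarmid's bounded differences inequality: the function
\begin{align}
    \phi(\sS) = \sup_{h \in \fH}\, \abs{\ls_{\dD}(h) - \ls_{\sS}(h)}
\end{align}
changes by at most $1/n$ when a single coordinate of $\sS$ is altered (because each loss is in $[0,1]$ and appears with weight $1/n$ in the empirical average). McDiarmid then yields $\phi(\rsS) \le \Ev{\phi(\rsS)} + \sqrt{\ln(1/\delta)/(2n)}$, which already accounts for the second summand $\sqrt{2\ln(1/\delta)/n}$ in the claim (the looser constant gives some slack).

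Next I would bound $\Ev{\phi(\rsS)}$ by the usual symmetrization argument. Introducing an independent ghost sample $\rsS'$, Jensen's inequality yields $\Ev{\phi(\rsS)} \le \Ev{\sup_h \abs{\ls_{\rsS'}(h) - \ls_{\rsS}(h)}}$, and inserting i.i.d.\ Rademacher signs $\sigma_i$ gives the bound $2 R_n(\fL)$, where $\fL = \Set{(x,y) \mapsto \indicator{h(x)\ne y} \given h \in \fH}$ is the 0--1 loss class and $R_n$ denotes Rademacher complexity. Since $\indicator{h(x)\ne y} = (1 - y h(x))/2$ for $h \in \Set{\pm 1}^{\InputSpace}$, the class $\fL$ has the same VC dimension as $\fH$, namely $d$.

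The core quantitative step is to bound $R_n(\fL)$ by something of order $\sqrt{d/n}$ with no logarithmic factor in $n$. Here I would invoke Dudley's entropy integral:
\begin{align}
    R_n(\fL) \le \frac{12}{\sqrt{n}} \int_{0}^{1} \sqrt{\ln \cN_2(\fL, \varepsilon)} \diff \varepsilon,
\end{align}
and combine it with Haussler's covering number bound for a VC class, $\ln \cN_2(\fL, \varepsilon) \le c\,(2d+1)\ln(1/\varepsilon)$. The integral then evaluates to a constant multiple of $\sqrt{(2d+1)/n}$, and tracking the constants carefully (as in the derivation of UML, Theorem~6.8) yields $2 R_n(\fL) \le 62\sqrt{(2d+1)/n}$. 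Combining this with the McDiarmid step gives the claimed bound.

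The main obstacle is getting the stated constants without a logarithmic factor in $n$: the naive route via Massart's finite-class lemma on $\fL$ restricted to the sample, combined with Sauer–Shelah, only produces $\sqrt{d\ln(en/d)/n}$. Avoiding the $\ln(n/d)$ factor requires the chaining argument through Dudley's integral and the distribution-free $\ell_2$ covering number estimate for VC classes due to Haussler, which is the technically heaviest ingredient. Everything else (symmetrization, McDiarmid, passage from $\fH$ to $\fL$) is mechanical.
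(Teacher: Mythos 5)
Your proposal is correct, and its skeleton (McDiarmid to reduce to the expected supremum, then symmetrization, then a log-free Rademacher/VC bound) is the same as the paper's; the difference lies in how the key quantitative step is discharged and in the bookkeeping around it. The paper never touches Dudley or Haussler directly: it rewrites the loss in terms of correlations, splits off the expectation, applies McDiarmid on the correlation scale (bounded differences $2/n$, absorbed by the leading factor $\tfrac12$), handles the absolute value inside the supremum by passing to the class $\fH \cup (-\fH)$ — which is exactly where the $2d+1$ comes from — and then cites a ready-made Rademacher complexity bound of $31\sqrt{d/n}$ for $\Set{0,1}$-valued VC classes (\cref{lem:rademacherVC}), yielding $\tfrac12\cdot 4\cdot 31\sqrt{(2d+1)/n} = 62\sqrt{(2d+1)/n}$. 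You instead work with the $0$--$1$ loss class (correctly noting its VC dimension is $d$, and with the tighter bounded difference $1/n$) and re-derive the log-free rate via Dudley's entropy integral plus Haussler's covering number bound; this is exactly the argument hiding inside the paper's cited lemma, so your route is more self-contained but requires the chaining computation the paper outsources. Two small points you gloss over: the absolute value inside the supremum (the paper's reason for the union class; in your route it costs at most a factor absorbed by your slack, e.g.\ via $\fL\cup(-\fL)$ or adding the zero function), and the explicit constant tracking — a rough computation ($\ln \CovNumber_{2}(\varepsilon) \lesssim d(4 + 2\ln(1/\varepsilon))$, Dudley constant $12$, Cauchy--Schwarz on the entropy integral) gives roughly $2R_n \le 58\sqrt{d/n}$, comfortably below $62\sqrt{(2d+1)/n}$, so your claimed constants do hold, but your write-up asserts rather than verifies this.
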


\begin{proof}[of \cref{lem:dualVC-dimensionofcombination}]
    We recall that for a hypothesis class $\fH \subseteq \{\pm 1\}^\InputSpace$, the dual VC dimension is the largest integer $d^{*}$ such that there exist $h_{1}, \ldots, h_{d^{*}}$ for which, given any $y \in \{\pm 1\}^{d^{*}}$, there exists an $x \in \InputSpace$ such that $(h_{1}(x), \ldots, h_{d^{*}}(x)) = y$.
    We remark that if for each $x \in \InputSpace$ we define $p_{x}(h) = h(x)$ as a mapping from $\fH$ to $\{\pm 1\}$ and consider the class $\cP_{+} = \{p_{x}\}_{x \in \InputSpace}$, then the dual VC dimension of $\fH$ is equal to $\VCdim(\cP_{+})$.
    We define $\cP_{-} = -\cP_{+} = \{-p_{x}\}_{x \in \InputSpace}$.
    Since the VC dimension is invariant under negation of all hypotheses, and the VC dimension of the union of two classes is at most the sum of their VC dimensions plus one, we have that $\VCdim(\cP_{-}\cup\cP_{+}) \leq 2\VCdim(\cP_{+}) + 1$.
    Thus, if for $(x, y) \in \InputSpace \times \{\pm 1\}$ we let $p_{(x, y)}(h) = yh(x)$, then the class $\cP = \{p_{(x, y)} \mid (x, y) \in \InputSpace \times \{\pm 1\}\} = \cP_{-}\cup\cP_{+}$ has VC dimension at most $2d^{*} + 1$.

    By the assumptions in the lemma, $\{\alpha_{h}\}_{h \in \fH}$ is a probability distribution over $\fH$.
    We can extend this to a probability distribution $\dD_{\alpha}$ over $\fH \times \{\pm 1\}$ which, with probability $\alpha_{h}$, outputs $(h, 1)$.
    By \cref{lem:uniformconvergence} (with hypothesis class $\cP \subseteq \fH\to \{\pm 1\}$ and distribution $\dD_{\alpha}$ over $\fH \times \{\pm 1\}$), we know that for $L = \lceil 130^2(4d^{*} + 2)/\theta^{2}\rceil \geq 62^2(4d^{*} + 2)/(48\theta/100)^{2}$, a set of $L$ i.i.d.\ drawn hypotheses $(\rh_{r}, 1)$ from $\dD_{\alpha}$ satisfies with probability at least $1/2$ that for all $(x, y) \in \InputSpace \times \{\pm 1\}$:
    \begin{align}
        \left|\Ev_{(\rh, 1) \sim \dD_{\alpha}}*{\indicator{p_{(x, y)}(\rh) \ne 1}} - \frac{1}{L}\sum_{r = 1}^{L} \indicator{p_{(x, y)}(\rh_{r}) \ne 1}\right|
        &\leq \frac{48\theta}{100} \Br*{1 + \frac{\sqrt{\ln(2)}}{62}}
        \\&\leq \frac{49\theta}{100}
        .
    \end{align}
    As this event occurs with positive probability, there must exist realization $(h_{1}, 1), \ldots, (h_{L}, 1)$ for which the above holds.
    Let us consider such a realization.
    We note that each $h_{i}$ must have $\alpha_{h_{i}} > 0$.
    Furthermore, since $ \indicator{p_{(x, y)}(\rh) \ne 1} = \frac{1}{2}-\frac{1}{2}p_{(x, y)}(\rh)$, the inequality implies:
    \begin{align}
        \left|\Ev_{(\rh, 1) \sim \dD_{\alpha}}*{p_{(x, y)}(\rh)}- \frac{1}{L}\sum_{r = 1}^{L} p_{(x, y)}(\rh_{r})\right|
        \leq \frac{49\theta}{100}
        .
    \end{align}
    Finally, by definition, $\Ev_{(\rh, 1) \sim \dD_{\alpha}}*{p_{(x, y)}(\rh)} = \Ev_{(\rh, 1) \sim \dD_{\alpha}}*{\rh(x)y} = \sum_{h \in \fH}\alpha_{h}h(x)y = v(x)y$, and $\frac{1}{L}\sum_{r = 1}^{L} p_{(x, y)}(h_{r}) = \frac{1}{L}\sum_{r = 1}^{L} h_{r}(x)y$.
    Thus, in the case that $(x, y) \in S$ where we have that $v(x)y \geq \theta$, it follows that $\frac{1}{L}\sum_{r = 1}^{L} h_{r}(x)y \geq \theta /50$.
    Thus, if we let $\tilde{v} = \frac{1}{L}\sum_{r = 1}^{L} h_{r}$, it satisfies the claim of the lemma.
\end{proof}

To give the proof of \cref{lem:uniformconvergence}, we will use the following bound on the Rademacher complexity of a hypothesis class in terms of its VC dimension.

\begin{lemma}[\citet{AFoL_Lecture05}, Theorem~5.6]\label{lem:rademacherVC}
    Let $\fH \subseteq \Set{0, 1}^\InputSpace$ be a hypothesis class of VC dimension $d$.
    Then, for any set of $n$ points $x_{1}, \ldots, x_{n} \in \InputSpace$, it holds that
    \begin{align}
        \Ev_{\sigma \sim \{\pm 1\}^{n}}*{\sup_{h \in \fH}\frac{1}{n}\sum_{i = 1}^{n}\sigma_{i}h(x_{i})}
        \leq 31\sqrt{\frac{d}{n}}
        .
    \end{align}
\end{lemma}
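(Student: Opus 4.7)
My plan is to follow the classical route through Dudley's chaining combined with Haussler's covering-number bound for VC classes. First I would reduce the claim to a statement about the finite set of $\{0,1\}$-vectors $A = \bigl\{(h(x_1),\ldots,h(x_n)) \mid h \in \fH\bigr\} \subseteq \{0,1\}^n$ obtained by projecting $\fH$ onto the sample. Since $\sigma \mapsto \sup_{h \in \fH} \tfrac{1}{n}\sum_i \sigma_i h(x_i)$ depends on $\fH$ only through $A$, it suffices to control the Rademacher average of $A$ with respect to the normalized counting measure on $\{x_1,\ldots,x_n\}$.

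The naive approach is to invoke Massart's lemma, which gives
\[
  \Ev_{\sigma}\Br*{\sup_{a \in A}\tfrac{1}{n}\sum_{i}\sigma_i a_i}
  \le \frac{\max_{a\in A}\|a\|_2\sqrt{2\ln|A|}}{n}
  \le \sqrt{\frac{2\ln|A|}{n}},
\]
and, by the Sauer--Shelah lemma, $|A|\le (en/d)^{d}$; this yields only $O\!\bigl(\sqrt{d\ln(en/d)/n}\bigr)$, which has a spurious logarithmic factor. To remove that factor I would pass to Dudley's chaining inequality:
\[
  \Ev_{\sigma}\Br*{\sup_{h\in\fH}\tfrac{1}{n}\sum_{i}\sigma_i h(x_i)}
  \;\le\; \frac{C_{0}}{\sqrt{n}}\int_{0}^{1}\sqrt{\ln \CovNumber\!\bigl(\eps,\,\fH,\,L_2(P_n)\bigr)}\,\diff\eps ,
\]
where $P_n$ is the empirical measure on the $n$ points and $\CovNumber(\eps,\fH,L_2(P_n))$ denotes the $\eps$-covering number of $\fH$ in the $L_2(P_n)$ pseudometric.

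Next I would plug in Haussler's sharp covering-number bound for VC classes, which states that for a class of VC dimension $d$,
\[
  \CovNumber\!\bigl(\eps,\,\fH,\,L_2(P_n)\bigr)
  \;\le\; K(d+1)(4e)^{d+1}\eps^{-2d}
\]
for a universal constant $K$. Taking logarithms gives $\ln \CovNumber(\eps,\fH,L_2(P_n)) \le 2d\ln(1/\eps) + O(d)$, so the Dudley integral becomes
\[
  \frac{C_{0}}{\sqrt{n}}\int_{0}^{1}\sqrt{2d\ln(1/\eps) + O(d)}\,\diff\eps
  \;=\; O\!\bigl(\sqrt{d/n}\bigr),
\]
since $\int_0^1\sqrt{\ln(1/\eps)}\,\diff\eps$ is a finite absolute constant. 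Tracking the absolute constants (Dudley's constant from the chaining proof times Haussler's constants in the integral) yields the stated bound $31\sqrt{d/n}$.

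The main obstacle is the honest bookkeeping of constants: the plain chaining integral gives the correct $\sqrt{d/n}$ scaling essentially for free, but matching the specific numerical constant $31$ requires using Haussler's sharp form of the VC covering bound (rather than the weaker Pollard-type bound with an extra $\log(1/\eps)$) and carefully choosing the truncation levels in the chaining argument so that the leading constant is controlled. Since the statement is cited from lecture notes, I would refer to the standard presentation there for the constant-tracking step rather than re-deriving it from scratch.
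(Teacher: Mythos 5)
The paper does not actually prove this lemma: it is imported wholesale by citation to the lecture notes (Theorem~5.6 there), so there is no in-paper argument to compare against. Your route --- reduce to the projected set on the sample, note Massart/Sauer--Shelah loses a $\sqrt{\ln(en/d)}$ factor, then remove it via Dudley's entropy integral combined with Haussler's $L_2(P_n)$ covering bound $O(d)\,\eps^{-2d}$ for VC classes, and finish with $\int_0^1\sqrt{\ln(1/\eps)}\,d\eps = O(1)$ --- is the standard derivation behind precisely this kind of statement and is sound as far as the $O(\sqrt{d/n})$ rate goes. The only loose end is the specific constant $31$, which you do not re-derive but delegate to the cited notes; since the paper itself offers nothing beyond that same citation, this is an acceptable resolution, though strictly speaking your write-up establishes $C\sqrt{d/n}$ for an unspecified universal $C$ rather than the stated $31$.
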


\begin{proof}[of \cref{lem:uniformconvergence}]
    We have that
    \begin{align}
        \sup_{h \in \fH} \left|\ls_{\dD}(h)-\ls_{\rsS}(h)\right|
        &= \frac{1}{2}\sup_{h \in \fH} \left|\corr_{\dD}(h)-\corr_{\rsS}(h)\right|
        \\&= \frac{1}{2}\bbbbigl(\;\underbrace{\sup_{h \in \fH} \left|\corr_{\dD}(h)-\corr_{\rsS}(h)\right| - \Ev_{\rsS \sim \dD^{n}}[\bbbig]{\sup_{h \in \fH}\left|\corr_{\dD}(h)-\corr_{\rsS}(h)\right|}}_{\spadesuit}
        \\&\qquad\quad+ \underbrace{\Ev_{\rsS \sim \dD^{n}}[\bbbig]{\sup_{h \in \fH}\left|\corr_{\dD}(h)-\corr_{\rsS}(h)\right|}}_{\clubsuit}\;\bbbbigr)
    \end{align}
    where the first equality is by the relation $\ls(h) = \frac{1}{2}(1 - \corr(h))$, and the second equality is by adding and subtracting the expectation term.
    We will now bound the terms $\spadesuit$ and $\clubsuit$ with probability at least $1 - \delta$, by respectively $\sqrt{2\ln(1/\delta)/n}$ and $4 \cdot 31\sqrt{(2d + 1)/n}$.
    This implies the claim of the lemma with $C = 2(4 \cdot 31)^{2} + 2$.
    Since changing one example in $\rsS = ((\rx_{1}, y_{1}), \ldots, (\rx_{n}, \ry_{n}))$ can only change $\corr_{\rsS}(h)$ by $1/n$ in absolute value, it follows by McDiarmid's inequality with $\eps = \sqrt{2\ln(1/\delta)/n}$, that
    \begin{align}
        \Prob_{\rsS \sim \dD^{n}}*{\sup_{h \in \fH} \abs{\corr_{\dD}(h)-\corr_{\rsS}(h)} - \Ev_{\rsS \sim \dD^{n}}*{\sup_{h \in \fH}\abs{\corr_{\dD}(h)-\corr_{\rsS}(h)}} \geq \eps}
        &\leq \exp\Br*{\frac{-2\eps^{2}}{n(2/n)^{2}}}
        \\&\leq \delta
        .
    \end{align}
    Furthermore, by symmetrization and the Rademacher complexity definition, we have that
    \begin{align}
        \Ev_{\rsS \sim \dD^{n}}*{\sup_{h \in \fH}\abs{\corr_{\dD}(h)-\corr_{\rsS}(h)}}
        &\leq \Ev_{\rsS, \rsS' \sim \dD^{n}}*{\sup_{h \in \fH}\abs{\corr_{\rsS'}(h)-\corr_{\rsS}(h)}}
        \\&= \frac{1}{n} \Ev_{\sigma \sim \{\pm 1\}^{n}}*{\Ev_{\rsS, \rsS' \sim \dD^{n}}*{\sup_{h \in \fH}\abs[\bbig]{\sum_{i = 1}^{n}\sigma_{i}(h(\rx_{i})\ry_{i}-h(\rx'_{i})\ry'_{i})}}}
        \\&\leq \frac{2}{n}\Ev_{\rsS \sim \dD^{n}}*{\Ev_{\sigma \sim \{\pm 1\}^{n}}*{\sup_{h \in \fH}\abs[\bbig]{\sum_{i = 1}^{n}\sigma_{i}h(\rx_{i})\ry_{i}}}}
        \\&= \frac{2}{n} \Ev_{\rsS \sim \dD^{n}}*{\Ev_{\sigma \sim \{\pm 1\}^{n}}*{\sup_{h \in \fH}\abs[\bbig]{\sum_{i = 1}^{n}\sigma_{i}h(\rx_{i})}}}
        \\&\leq \frac{2}{n} \Ev_{\rsS \sim \dD^{n}}*{\Ev_{\sigma \sim \{\pm 1\}^{n}}*{\sup_{h \in \fH\cup (-\fH)}\sum_{i = 1}^{n}\sigma_{i}h(\rx_{i})}}
        ,
    \end{align}
    where the first equality is by convexity of absolute value and taking supremum inside the expectation increase the expected value, the first equality is by symmetrization, the second inequality is by triangle inequality of absolute value, the second equality is by $\sigma_{i}\ry_{i}$ having the same distribution as $\sigma_{i}$, and the last inequality is by the supremum only becoming larger when taking the supremum over a larger class, and that, letting $\fG \coloneqq \fH \cup (-\fH)$, we have that $\sup_{h \in \fG}|\sum_{i = 1}^{n}\sigma_{i}h(\rx_{i})| = \max\{\sup_{h \in \fG}\sum_{i = 1}^{n}\sigma_{i}h(\rx_{i}), -\sup_{h \in \fG}\sum_{i = 1}^{n}\sigma_{i}h(\rx_{i})\} = \sup_{h \in \fG}\sum_{i = 1}^{n}\sigma_{i}h(\rx_{i})$.
    We furthermore have for any realization $S$ of $\rsS$ that
    \begin{align}
        \Ev_{\sigma \sim \{\pm 1\}^{n}}*{\sup_{h \in \fH\cup (-\fH)}\frac{1}{n}\sum_{i = 1}^{n}\sigma_{i}h(x_{i})}
        &= 2\Ev_{\sigma \sim \{\pm 1\}^{n}}*{\sup_{h \in \fH\cup (-\fH)}\frac{1}{n}\sum_{i = 1}^{n}\sigma_{i}\frac{(h(x_{i}) + 1)}{2}}
        \\&\leq 2 \cdot 31\sqrt{\frac{\VCdim(\fH\cup (-\fH))}{n}}
        \\&\leq 2 \cdot 31\sqrt{\frac{2d + 1}{n}}
        ,
    \end{align}
    where we in the first equality have used that $h = 2((h + 1 - 1)/2)$ and that the $\sigma_{i}$s has zero expectation, and the equality follows from \cref{lem:rademacherVC}, and the last inequality follows from $\VCdim(\fH\cup (-\fH))$ being at most $\VCdim(\fH) + \VCdim(-\fH) + 1 = 2d + 1$, which gives that
    \begin{align}
        \Ev_{\rsS \sim \dD^{n}}*{\sup_{h \in \fH}\left|\corr_{\dD}(h)-\corr_{\rsS}(h)\right|}
        &\leq 4 \cdot 31\sqrt{\frac{2d + 1}{n}}
        .
    \end{align}
\end{proof}

\end{document}